
\documentclass[english,a4paper,12pt]{article}

\PassOptionsToPackage{pdfpagelabels=false}{hyperref}

\hyphenation{analysis onemax}

\usepackage{amsmath, amsxtra, amsfonts, amssymb, amstext}
\usepackage{amsthm}
\usepackage{booktabs}
\usepackage{nicefrac}
\usepackage{xspace}
\usepackage[noadjust]{cite}
\usepackage{url}\urlstyle{rm}
\usepackage{graphics}
\usepackage[usenames,dvipsnames]{xcolor}
\usepackage[colorlinks]{hyperref}
\definecolor{linkblue}{rgb}{0.1,0.1,0.8}
\hypersetup{colorlinks=true,linkcolor=linkblue,filecolor=linkblue,urlcolor=linkblue,citecolor=linkblue}
\usepackage[algo2e,ruled,vlined,linesnumbered]{algorithm2e}
\usepackage{wrapfig}

\allowdisplaybreaks[4]
\clubpenalty=10000
\widowpenalty=10000
\newtheorem{theorem}{Theorem}
\newtheorem{lemma}[theorem]{Lemma}
\newtheorem{corollary}[theorem]{Corollary}
\newtheorem{definition}[theorem]{Definition}

\newcommand{\oea}{\mbox{$(1 + 1)$~EA}\xspace}
\newcommand{\oeamu}{$(1 + 1)$~EA$_{\mu,p}$\xspace}

\newcommand{\oplea}{\mbox{$(1+\lambda)$~EA}\xspace}
\newcommand{\mpoea}{\mbox{$(\mu+1)$~EA}\xspace}
\newcommand{\mplea}{\mbox{$(\mu+\lambda)$~EA}\xspace}

\newcommand{\OM}{\textsc{OneMax}\xspace}
\newcommand{\onemax}{\OM}
\newcommand{\LO}{\textsc{Leading\-Ones}\xspace}
\newcommand{\leadingones}{\LO}
\newcommand{\needle}{\textsc{Needle}\xspace}
\DeclareMathOperator{\jump}{\textsc{jump}}
\DeclareMathOperator{\rand}{rand}

\newcommand{\R}{\ensuremath{\mathbb{R}}}

\newcommand{\N}{\ensuremath{\mathbb{N}}} 

\newcommand{\calA}{\ensuremath{\mathcal{A}}} 
\newcommand{\calS}{\ensuremath{\mathcal{S}}} 
\newcommand{\calT}{\ensuremath{\mathcal{T}}} 
	
\DeclareMathOperator{\Bin}{Bin}
\DeclareMathOperator{\Geom}{Geom}

\DeclareMathOperator{\mutate}{mutate}
\newcommand{\pmin}{p_{\mathrm{min}}}

\newcommand{\eps}{\varepsilon}

\newcommand{\assign}{\leftarrow}

\pagestyle{plain}
\pagenumbering{arabic}

\begin{document}
{\sloppy

\title{Better Runtime Guarantees \\Via Stochastic Domination\footnote{Extended version of a paper that appeared at \emph{EvoCOP 2018}~\cite{Doerr18evocop}. This version contains as new material a section on known precise runtime distributions, a Chernoff bound for sums of independent coupon collector runtimes, several new tail bounds for classic runtime results, and a section on counter-examples.}}

\author{Benjamin Doerr\\ \'Ecole Polytechnique\\ CNRS\\ Laboratoire d'Informatique (LIX)\\ Palaiseau\\ France}

\maketitle
 
\begin{abstract}
  Apart from few exceptions, the mathematical runtime analysis of evolutionary algorithms is mostly concerned with expected runtimes. In this work, we argue that stochastic domination is a notion that should be used more frequently in this area. Stochastic domination allows to formulate much more informative performance guarantees, it allows to decouple the algorithm analysis into the true algorithmic part of detecting a domination statement and the probability-theoretical part of deriving the desired probabilistic guarantees from this statement, and it helps finding simpler and more natural proofs. 
  
  As particular results, we prove a fitness level theorem which shows that the runtime is dominated by a sum of independent geometric random variables, we prove the first tail bounds for several classic runtime problems, and we give a short and natural proof for Witt's result that the runtime of any $(\mu,p)$ mutation-based algorithm on any function with unique optimum is subdominated by the runtime of a variant of the \oea on the \onemax function.
  As side-products, we determine the fastest unbiased (1+1) algorithm for the \leadingones benchmark problem, both in the general case and when restricted to static mutation operators, and we prove a Chernoff-type tail bound for sums of independent coupon collector distributions.
\end{abstract}

\textbf{Keywords:} Evolutionary algorithms, runtime analysis.

\section{Introduction}

The analysis of evolutionary algorithms via mathematical means is an established part of evolutionary computation research. The subarea of \emph{runtime analysis} aims at giving proven performance guarantees on the time an evolutionary algorithm takes to find optimal or near-optimal solutions. Traditionally, this area produces estimates for the expected runtime, which are occasionally augmented by tail bounds. A justification for the restriction to expectations was that already for very simply evolutionary algorithms and optimization problems, the stochastic processes arising from running this algorithm on this problem are so complicated that any more detailed analysis is infeasible. See the analysis how the $(1+1)$ evolutionary algorithm optimizes linear functions $f \colon \{0,1\}^n \to \R; x \mapsto a_1 x_1 + \dots + a_n x_n$ in~\cite{DrosteJW02} for an example.

In this work, we shall argue that the restriction to expectations is less justified and propose stochastic domination as an alternative. It is clear that the precise distribution of the runtime of an evolutionary algorithms often is out of reach (see Section~\ref{sec:precise} for an overview on what runtime distributions are known exactly). Finding the precise distribution is maybe not even an interesting target because most likely already the result will be too complicated to be useful. What would be very useful is a possibly not absolutely tight upper bound-type statement that concerns the whole distribution of the runtime. 

One way to formalize such statement is via the notion of \emph{stochastic domination}. A real-valued random variable $Y$ stochastically dominates another one $X$ if and only if for each $\lambda \in \R$, we have 
\[\Pr[X \le \lambda] \ge \Pr[Y \le \lambda].\] 
If $X$ and $Y$ describe the runtimes of two algorithms $A$ and $B$, then this domination statement is a very strong way of saying that algorithm $A$ is at least as fast as $B$. See Section~\ref{secdom} for a more detailed discussion of the implication of such a domination statement.

\subsection{Our Results}

In this work, we shall give three main arguments for a more frequent use of domination arguments in runtime analysis and support these with several new runtime results.

\paragraph{Stochastic domination is often easy to show.} Surprisingly, despite being a much stronger type of assertion, stochastic domination statements are often easy to obtain. The reason is that many of the classic proofs implicitly contain all necessary information, they only fail to formulate the result as a domination statement. 

As an example, we prove in Section~\ref{secfitness} a natural domination version of the classic fitness level method. In analogy to the classic result, which translates pessimistic improvement probabilities $p_1, \ldots, p_{m-1}$ into an expected runtime estimate $E[T] \le \sum_{i=1}^{m-1} \frac 1 {p_i}$, we show that under the same assumptions the runtime is dominated by the sum of independent geometric random variables with success probabilities $p_1, \ldots, p_{m-1}$. This statement implies the classic result, but also implies tail bound for the runtime via Chernoff bounds for geometric random variables. We note that, while our extension of the fitness level theorem is very natural, the proof is not totally obvious, which might explain why previous works were restricted to the expectation version. 

\paragraph{Stochastic domination allows to separate the core algorithm analysis and the probability theoretic derivation of probabilistic runtime statements.} The reason why stochastic domination statements are often easy to obtain is that they are close to the actions of the algorithms. When we are waiting for the algorithm to succeed in performing a particular action, then it is a geometric distribution that describes this waiting time. To make such a statement precise for a given problem, we need to understand how the algorithm achieves the particular goal. This requires a good understanding of the algorithm and the problem as well as some elementary probability theory and discrete mathematics, but usually no greater expertise in probability theory. We give in Sections~\ref{secfitnessappl} and~\ref{secbeyondfitness} several examples, mostly using the fitness level method, but also concerning the single-source shortest paths problem, where the fitness level method is not suitable to give the best known results.

Once we have a domination statement formulated, e.g., that the runtime is dominated by a sum of independent geometric distributions, then deeper probability theoretic arguments like Chernoff-type tail bounds come into play. This part of the analysis is independent of the algorithm and only relies on the domination statement. Exemplarily, we derive tail bounds for the runtime of the \oplea on \onemax, the \mpoea on \leadingones, the \oea on jump functions, the \oea for sorting, the multi-criteria \oea for the single-source shortest path problem, and some more results.

That the two stages of the analysis are of a different nature is also visible in the history of runtime analysis. As discussed above, many classic proofs essentially contains all ingredients to formulate a domination statement. However, the mathematical tools to analyze sums of independent geometric random variables were developed much later (and this development is still ongoing). 

This historical note also shows that from the viewpoint of research organization, it would have been profitable if previous works would have been formulated in terms of domination statements. This might have spurred a faster development of suitable Chernoff bounds and, in any case, it would have made it easier to apply the recently found Chernoff bounds (see Theorems~\ref{tprobcgeomungleich} to~\ref{tprobgeomharmonic}) to these algorithmic problems.

\paragraph{Stochastic domination often leads to more natural and shorter proofs.} To demonstrate this in an application substantially different from the previous ones, we regard the classic lower-bound result which, in simple words, states that solving \onemax via the \oea is the fastest optimization process among many mutation-based algorithms and problems with unique optimum. This statement, of course, should be formulated via stochastic domination and this has indeed been done in previous work. However, as we shall argue in Section~\ref{seclower}, we also have the statement that when comparing the two processes the distance of the current-best solution from the optimum for the general function dominates this distance for the \onemax function. This natural statement immediately implies the domination relation between the runtimes. We make this precise for the current-strongest \onemax-is-easiest result~\cite{Witt13}. This will shorten the previous, two-and-a-half pages long  complicated proof to an intuitive proof of less than a page. 

\paragraph{Side-results.} We use our discussion of previous results on precisely known distributions of runtimes of evolutionary algorithms to (i)~give a formal proof for the distribution result on the runtime of the \oea on the \leadingones benchmark function stated without proof in~\cite{DoerrJWZ13}, (ii)~to extend this result to a wide class of (1+1) type algorithms and to time-to-target runtimes, and (iii)~to determine the best possible unbiased (1+1) algorithm for this benchmark function, both in the general case and with the restriction to static mutation operators (Section~\ref{sec:deflo}). 

We also use our discussion of Chernoff bounds for sums of geometric random variables to show a Chernoff bound for sums of coupon collector random variables (Section~\ref{sec:chernoffgeom}).

\subsection*{Related work} 

The use of stochastic domination is not totally new in the theory of evolutionary computation, however, the results so far appear rather sporadic than systematic. 

Possibly the first to use the notion of stochastic domination was Droste, who in~\cite{Droste03,Droste04} employed it to make precise an argument often used in an informal manner, namely that some artificial random process is not faster than the process describing a run of the algorithm under investigation. Using domination as \emph{tool in proofs} has been done subsequently in various applications, among others, binary particle swarm (PSO) algorithms~\cite{SudholtW08}, ant-colony optimization~\cite{SudholtT12}, evolutionary multi-objective optimization~\cite{BringmannF13}, optimization of monotonic functions~\cite{ColinDF14,Lengler18}, non-elitist algorithms~\cite{RoweS14}, fixed budget analyses~\cite{LenglerS15}, the simple GA~\cite{OlivetoW15}, estimation-of-distribution algorithms~\cite{FriedrichKKS17,KrejcaW17}, genetic programming~\cite{DoerrKLL17}, island models~\cite{DoerrFFFKS17}, and dynamic and noisy optimization~\cite{Dang-NhuDDIN18}.

The first time that stochastic domination was used to \emph{formulate results} (though without explicitly calling it stochastic domination) was in~\cite{BorisovskyE08}, where several result where shown of the type that when one mutation operator is better than another one (in a suitable domination sense), then the algorithm using the first one always has at least as good solutions as the second (in the sense of stochastic domination). In~\cite{Witt13}, the result is shown that the runtime of an arbitrary $(\mu,p)$ mutation-based algorithm on any objective function with unique optimum dominates the runtime of the \oea with mutation rate $p$ and best-of-$\mu$ initialization on the \onemax function (see Section~\ref{seclower} for more details on this result).

What comes closest to this work is the paper~\cite{ZhouLLH12}, which also tries to establish runtime analysis beyond expectations in a formalized manner. The notion proposed in~\cite{ZhouLLH12}, called \emph{probable computational time} $L(\delta)$, is the smallest time $T$ such that the algorithm under investigation within the first $T$ fitness evaluations finds an optimal solution with probability at least $1-\delta$. In some sense, this notion can be seen as an inverse of the classic tail bound language, which makes statements of the type that the probability that the algorithm needs more than $T$ iterations, is at most some function in $T$, which is often has an exponential tail. To prove results on the probable computational time, the authors implicitly use (but do not prove) the result that the fitness level method gives a stochastic domination by a sum of independent geometric random variables (which we prove in Section~\ref{secfitness}). When using this result to obtain bounds on the probable computational time, they suffer from the fact that at that time, good tail bounds for sums of geometric random variables with different success probabilities where not yet available (these appeared only in~\cite{Witt14,DoerrD15tight}). For this reason, they had to use a self-made tail bound, which unfortunately gives a non-trivial tail probability only for values higher than twice the expectation. With this limitation, tail bounds for the runtimes of the algorithms RLS, \oea, \mpoea, MMAS$^*$ and binary PSO on the optimization problems \onemax and \leadingones are proven. 

In~\cite{DoerrL17ecj}, the complexity theoretic notion of $p$-Monte Carlo black-box complexity was introduced, which serves at providing more information than what expected runtimes convey for lower bounds on runtimes.


\section{Exact Distributions}\label{sec:precise}

We said above that stochastic domination can be used to formulate runtime bounds in a distributional sense, which can give a very complete picture but without having to determine the exact runtime distribution. Before going into details, we shall summarize in this section the few situations in which it has been possible to describe the runtime distribution precisely. Not surprisingly, these all regard very simple randomized search heuristics. 

This section contains a few new results, e.g., the determination of the fastest unbiased (1+1) algorithm for the \leadingones function for static and dynamic mutation operators.

\subsection{Random Search}\label{sec:geomdef}\label{sec:randomsearch}

The \emph{random search heuristics} repeatedly samples solutions chosen uniformly at random until some termination criterion is met. Therefore, the time to sample a certain solution (or one of a given subset) follows a \emph{geometric distribution}. We say that a random variable $X$ is geometrically distributed with parameter $0 < p \le 1$ and write $X \sim \Geom(p)$ if  \[\Pr[X = k] = (1-p)^{k-1} p\] for all $k \in \N$. Note that this implies $\Pr[X \ge k] = (1-p)^{k-1}$ and $E[X] = \frac 1p$.

\begin{lemma}\label{lem:randomsearch}
  Consider some search or optimization problem with finite search space $\calS$. Let $\calT$ be a non-empty subset of $\calS$. Let $T$ be the random variable describing the first time random search samples a solution from $\calT$. Then 
  \[T \sim \Geom\left(\frac{|\calT|}{|\calS|}\right).\]
  In particular, the time $T$ to find a unique optimum in a problem with bit-string representation of length $n$ is $T \sim \Geom(2^{-n})$.
\end{lemma}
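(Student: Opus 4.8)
The plan is to directly compute the distribution of $T$ by observing that random search performs independent, identically distributed sampling steps, and that at each step the probability of ``success'' (landing in $\calT$) is exactly $|\calT|/|\calS|$. First I would fix notation: let $X_1, X_2, \dots$ denote the sequence of solutions sampled by random search, each chosen uniformly at random from $\calS$ and independently of all previous samples. Since each $X_i$ is uniform on the finite set $\calS$, the probability that a single sample lies in $\calT$ is $p := \Pr[X_i \in \calT] = |\calT|/|\calS|$, and because $\calT$ is non-empty we have $p > 0$, so the parameter is a valid geometric parameter in $(0,1]$.

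Next I would identify $T$ as the index of the first successful sample, i.e.\ $T = \min\{ k \in \N : X_k \in \calT \}$. The key step is then to verify the geometric point-mass formula: the event $\{T = k\}$ is precisely the event that the first $k-1$ samples all miss $\calT$ and the $k$-th sample hits it. By independence of the samples, this factorizes as
\[
  \Pr[T = k] = \Bigl(\Pr[X_i \notin \calT]\Bigr)^{k-1} \cdot \Pr[X_k \in \calT] = (1-p)^{k-1} p,
\]
which is exactly the defining equation of $\Geom(p)$ given in the excerpt. This establishes $T \sim \Geom(|\calT|/|\calS|)$.

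Finally, the ``in particular'' claim follows by specialization: for a bit-string representation of length $n$ we have $|\calS| = 2^n$, and a unique optimum means $|\calT| = 1$, so the parameter is $1/2^n = 2^{-n}$, giving $T \sim \Geom(2^{-n})$.

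I do not expect any genuine obstacle here, since the result is essentially the definition of the geometric distribution applied to i.i.d.\ Bernoulli trials. The only point requiring a small amount of care is the independence assumption: one must make explicit that random search draws each solution uniformly and \emph{independently} of the previous draws, as this is what allows the probability of the length-$(k-1)$ miss-streak to factor as $(1-p)^{k-1}$. If one instead imagined a sampling-without-replacement variant, the trials would not be independent and the distribution would no longer be geometric, so stating the independence of the draws is the one assumption worth highlighting explicitly.
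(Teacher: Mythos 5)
Your proof is correct and matches the paper's (implicit) argument exactly: the paper states this lemma without a written proof, treating it as immediate from the fact that random search performs i.i.d.\ uniform samples, so the waiting time for a sample in $\calT$ is geometric with parameter $|\calT|/|\calS|$ by the defining formula $\Pr[T=k]=(1-p)^{k-1}p$. Your explicit emphasis on the independence of the draws is the right point of care, and the specialization to $|\calS|=2^n$, $|\calT|=1$ is exactly as intended.
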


\subsection{Randomized Local Search}\label{sec:onemaxdef}\label{sec:rls}

The \emph{randomized local search} (RLS) heuristic starts with a random search point and then repeats sampling a neighbor of the current solution and replacing the current solution with this offspring if it is at least as good. In the case of the bit-string representation of length $n$, neighbors are bit-strings which differ in exactly one bit. Hence this heuristic repeats flipping a single random bit and continues with the new solution if it is not worse than the parent. This is exactly Algorithm~\ref{alg:oea} with the $1$-bit mutation operator.
\begin{algorithm2e}%
	Choose $x \in \{0,1\}^n$ uniformly at random\;
  \For{$t=1,2,3,\ldots$}{
    $y \assign \mutate(x)$\;
    \lIf{$f(y)\geq f(x)$}{$x \assign y$}
  }
\caption{The family of (1+1) search heuristics for maximizing a given objective function~$f\colon\{0,1\}^n\to\mathbb{R}$. When the mutation operator returns a random Hamming neighbor of~$x$, that is, flips a random bit in~$x$, then this algorithm is the RLS heuristic. When the mutation operator flips each bit independently with probability $p$, then we obtain the \oea with mutation rate $p$. The classic \oea uses a mutation rate of $p=\frac 1n$.}
\label{alg:oea}
\end{algorithm2e}

When used to optimize the classic benchmark function $\onemax \colon \{0,1\}^n \to \R$ defined by \[\onemax(x) = \sum_{i=1}^n x_i\] or any other \emph{strictly monotonic} function $f \colon \{0,1\}^n \to \R$ (where flipping a $0$-bit into a $1$ always increases the fitness), the resulting optimization process is equivalent to a coupon collector process in which each type of coupon is present initially with probability $\frac 12$, see~\cite{DoerrD16}. The runtime of the coupon collector process with $n-k$ of the $n$ coupons initially present is $\sum_{i=1}^k \Geom(\frac in)$, where this sum is understood as sum of independent distributions. Consequently, the runtime of RLS on strictly monotonic functions can be described as follows, where we write $X \sim \Bin(n,p)$ to indicate that $X$ follows a \emph{binomial distribution} with parameters $n$ and $p \in [0,1]$, that is, we have $\Pr[X = i] = \binom{n}{i} p^i (1-p)^{n-i}$ for all $i \in [0..n]$.

\begin{lemma}
  Let $f \colon \{0,1\}^n \to \R$ be any strictly monotonic function. Let $T$ be the number of iterations\footnote{In this work, we shall call the runtime of an iterative algorithm the number of iterations taken until the optimum is generated for the first time. In comparison with the classic definition, the number of fitness evaluations until the optimum is evaluated, we thus do not count the initial search points and, in the case that per iteration $\lambda > 1$ individuals are generated, we do not count the runtime of the iteration as $\lambda$, but as one.} taken by RLS to sample the (unique) optimum of~$f$. Then \[T \sim \sum_{i=0}^{n-1} 1_{X \le i} \cdot \Geom(\tfrac{n-i}{n}),\]
  where $X \sim \Bin(n,\frac 12)$ and we assume that $X$ and the arising geometric distributions are independent.
\end{lemma}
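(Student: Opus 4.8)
The plan is to first reduce RLS on a strictly monotonic $f$ to a pure coupon-collector process, and then to verify that the coupon-collector runtime matches the stated indicator sum. Since $f$ is strictly monotonic, flipping a $0$-bit to $1$ strictly increases the fitness and is therefore accepted, whereas flipping a $1$-bit to $0$ strictly decreases it and is rejected (as the algorithm only accepts offspring with $f(y) \ge f(x)$). Consequently the only accepted steps are those that turn a remaining $0$ into a $1$, the number of one-bits is non-decreasing, and the run terminates precisely when the all-ones string, the unique optimum of $f$, is reached. This is exactly the coupon-collector dynamics recalled before the lemma, the ``uncollected coupons'' being the positions currently carrying a $0$.

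Next I would condition on the random initial search point. Each of the $n$ bits is independently $1$ with probability $\frac12$, so the number $X$ of initially-correct bits satisfies $X \sim \Bin(n,\frac12)$, and given $X$ there are exactly $n-X$ zeros left to collect. By the Markov property of RLS, namely that each iteration flips a uniformly random bit independently of the history, the waiting time to decrease the number of remaining zeros from $j$ to $j-1$ is geometric: when $j$ zeros remain, a single iteration makes progress if and only if it selects one of these $j$ positions, which happens with probability $\frac jn$. These successive waiting times are mutually independent and independent of $X$. Hence, conditionally on $X$, the runtime satisfies $T \sim \sum_{j=1}^{n-X}\Geom(\frac jn)$.

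It then remains to rewrite this conditional sum in the unconditional indicator form claimed in the lemma. Substituting $i = n-j$ turns $\sum_{j=1}^{n-X}\Geom(\frac jn)$ into $\sum_{i=X}^{n-1}\Geom(\frac{n-i}{n})$, and since for $i \le n-1$ the condition $X \le i$ is equivalent to $i \in \{X,\dots,n-1\}$, this equals $\sum_{i=0}^{n-1} 1_{X\le i}\cdot\Geom(\frac{n-i}{n})$. To make the distributional identity with the asserted independence precise, I would fix a single independent family $(G_i)_{0\le i\le n-1}$ with $G_i\sim\Geom(\frac{n-i}{n})$, independent of $X$, and identify the $j$-th coupon-collection time with $G_{n-j}$; the indicator $1_{X\le i}$ then simply activates exactly those $G_i$ for which the corresponding collection step actually takes place.

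The reduction in the first paragraph and the elementary coupon-collector computation in the second are routine given the description preceding the lemma. The one genuinely delicate point is the last step, where the \emph{number} of summands is itself random because it depends on $X$: one must check that the coupon-collection waiting times can be realized as a single fixed, $X$-independent collection of independent geometrics, selected by the indicators, so that the law of $T$ coincides with the right-hand side and the stated independence of $X$ and the $G_i$ genuinely holds.
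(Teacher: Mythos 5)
Your proof is correct and follows essentially the same route as the paper, which states the lemma as a direct consequence of the coupon-collector equivalence (citing the reduction of RLS on strictly monotonic functions to collecting the initially missing $0$-bits, with waiting times $\Geom(\frac jn)$) without spelling out a formal proof. Your additional care with the reindexing $i = n-j$ and with realizing the waiting times as a fixed family $(G_i)$ independent of $X$ makes explicit exactly the bookkeeping the paper leaves implicit.
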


\subsection{The LeadingOnes Benchmark Problem}\label{sec:deflo}

For a surprisingly large class of algorithms the precise runtime on the \leadingones benchmark problem can be determined. The function $\LO \colon \{0,1\}^n \to \R$ is defined by \[\LO(x) = \min\{i \in [0..n] \mid \forall j \in [1..n] \cap \R_{\le i} : x_j = 1\}\] for all $x \in \{0,1\}^n$. 

In parallel independent work, the precise expected runtime of the \oea on the \leadingones benchmark function was determined in~\cite{BottcherDN10,Sudholt13} (note that~\cite{Sudholt13} is the journal version of a work that appeared at the same conference as~\cite{BottcherDN10}). In terms of results, the two works are very similar. The work~\cite{Sudholt13} is also regards the \oea with Best-of-$\mu$ initialization and shows that a lower-order runtime gain can be obtained from taking as initial solution the best of, say, $n$ random search points. The work~\cite{BottcherDN10} also shows that the often recommended mutation rate of $p=1/n$ is not optimal. A runtime smaller by 16\% can be obtained from taking $p = 1.59 / n$ and another 12\% can be gained by using a fitness-dependent mutation rate. 

In terms of methods, the two works are substantially different. An advantage of the analysis in~\cite{BottcherDN10} is that it can easily be reformulated to give the precise distribution of this runtime. This was first observed in~\cite{DoerrJWZ13}, but used several times subsequently. The arguments in~\cite{BottcherDN10} are not specific to the \oea, but are valid for any other (1+1) algorithm as defined in Algorithm~\ref{alg:oea}. Since already the observation in~\cite{DoerrJWZ13} was not formally proven, we now quickly formulate the general result and prove it.

\begin{theorem}\label{thmLO}
  Consider the run of a (1+1) algorithm optimizing the \LO function. Let $T$ be the first time the optimum is generated. Then 
  \[T \sim \sum_{i=0}^{n-1} X_i \cdot \Geom(q_i),\]
  where $X_0, X_1, \dots, X_{n-1}$ are uniformly distributed binary random variables, the $X_i$ and $\Geom(q_i)$ are mutually independent, and for all $i \in [0..n-1]$ we denote by $q_i$ the probability that the mutation operator generates from a search point of fitness exactly $i$ a strictly better search point. Consequently, $E[T] = \frac 12 \sum_{i=0}^{n-1} \frac 1 {q_i}$, where we read $\frac 1 {p_i} = \infty$ when $p_i = 0$.
\end{theorem}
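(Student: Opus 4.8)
The plan is to decompose the runtime according to the fitness levels and to exploit the defining feature of \leadingones, namely that the fitness of a search point depends only on its leading bits while the remaining ``free'' bits are ignored by the fitness function. Write $f = \LO$ and, for $i \in [0..n]$, let $\tau_i$ denote the first iteration at which the current search point $x$ satisfies $x_1 = \dots = x_i = 1$, that is, the first time the fitness is at least $i$; note $\tau_0 = 0$ and $\tau_0 \le \tau_1 \le \dots \le \tau_n$. Because the fitness is non-decreasing along the run (Algorithm~\ref{alg:oea} is elitist) and a fitness of at least $i$ is attained for the first time exactly at $\tau_i$, the algorithm assumes the fitness value \emph{exactly} $i$ at some iteration if and only if the bit $x_{i+1}$ equals $0$ at time $\tau_i$; otherwise the fitness jumps from below $i$ to above $i$ and level $i$ is skipped. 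I would therefore set $b_{i+1} := (x_{i+1}\text{ at time }\tau_i)$ and define $X_i := 1_{b_{i+1}=0}$, so that $X_i$ is precisely the indicator that level $i$ is visited.

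The crux is the following invariant, proved by induction over the iterations: at every iteration, conditioned on the current fitness being $i$ and on the entire history of the run, the free bits $x_{i+2},\dots,x_n$ are independent and uniformly distributed in $\{0,1\}$. The base case is the uniform random initialization. For the induction step one distinguishes the three possible outcomes of an iteration at level $i$: a rejected move leaves $x$ unchanged; an accepted move keeping the fitness at $i$ must leave $x_1,\dots,x_{i+1}$ fixed and only acts on the free bits; and an accepted improving move to some level $j > i$ must turn $x_{i+1},\dots,x_j$ into ones and leaves $x_{j+2},\dots,x_n$ as the new free bits. In each case the suffix distribution stays uniform because the event conditioned upon --- which bits among positions $1,\dots,j+1$ flip or keep their value --- concerns only these leading positions and is independent of the flips applied to positions $j+2,\dots,n$; this is where the structure of the mutation operator enters, and it is exactly the hypothesis that makes $q_i$ well defined. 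It holds for the operators of Algorithm~\ref{alg:oea}, i.e., whenever the mutation applies to $x$ a flip pattern whose distribution does not depend on $x$. The invariant in particular says that at time $\tau_i$ the revealed bit $b_{i+1}$ is an unbiased coin flip independent of the history up to $\tau_i$, so the $b_{i+1}$, and hence the indicators $X_0,\dots,X_{n-1}$, are mutually independent with each $X_i \sim \Bin(1,\tfrac12)$.

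It remains to describe the time spent on each level and to assemble the pieces. While the algorithm sits at level $i$, every iteration produces a strict improvement with probability $q_i$ --- the same value in every such iteration precisely because, by definition, $q_i$ depends only on the fitness value $i$ and not on the free bits. Hence the sojourn time on level $i$, provided the level is visited at all, is a $\Geom(q_i)$ variable $G_i$ (the elementary waiting-time argument already underlying Lemma~\ref{lem:randomsearch}). The indicator $X_i$ is determined at $\tau_i$, before the sojourn begins, so it is independent of the fresh mutation randomness driving $G_i$; as different levels are processed in disjoint iteration intervals, the variables $G_0,\dots,G_{n-1}$ are independent of each other and, together with the invariant, independent of the $X_i$. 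The total runtime is the sum of the per-level sojourn times, each equal to $X_i \cdot G_i$ (it is $0$ when level $i$ is skipped), whence $T \sim \sum_{i=0}^{n-1} X_i \cdot \Geom(q_i)$ with all variables mutually independent, as claimed.

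The expectation then follows from linearity and independence: $E[X_i \cdot \Geom(q_i)] = E[X_i]\,E[\Geom(q_i)] = \tfrac12\cdot\tfrac1{q_i}$, so $E[T] = \tfrac12\sum_{i=0}^{n-1}\tfrac1{q_i}$, with the convention $\tfrac1{q_i}=\infty$ when $q_i=0$ (a level that is reached but can never be left traps the algorithm forever, consistent with the infinite summand). I expect the main obstacle to be a fully rigorous proof of the uniformity invariant: one has to argue carefully that conditioning on the selection decision and on the precise new fitness level does not bias the trailing bits, and that this conditioning is independent across the stopping times $\tau_i$, since it is exactly this independence that upgrades the per-level statements into the product structure of the claimed distribution.
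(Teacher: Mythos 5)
Your decomposition is built on the same core insight as the paper's proof: the bits behind position (fitness${}+1$) remain uniform and independent, the level-occupancy indicators are fair coins, and the sojourn on a visited level $i$ is $\Geom(q_i)$. The difference is bookkeeping: the paper avoids your direct sum over stopping times $\tau_i$ by defining $T_i^0$ and $T_i^{\rand}$ as the runtimes started from a random point of fitness exactly $i$ resp.\ at least $i$, proving the distributional identities $T_i^0 = \Geom(q_i) + T_{i+1}^{\rand}$ and $T_i^{\rand} = X_i \Geom(q_i) + T_{i+1}^{\rand}$, and inducting. That recursion automatically delivers the mutual independence of all the $X_i$ and geometric pieces, which in your version has to be assembled by hand across the stopping times --- the very point you flag at the end as the main obstacle. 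So your route is legitimate, but the recursion is precisely the device that discharges your worry cleanly.

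There is, however, one concretely wrong justification in your induction step. You argue that the conditioning event (``which bits among positions $1,\dots,j+1$ flip or keep their value'') ``is independent of the flips applied to positions $j+2,\dots,n$.'' That independence holds for standard-bit mutation, whose flip pattern has independent coordinates, but it is \emph{false} for the one-bit operator of RLS and for $k$-bit flips: there, knowing that no position in $[1..i+1]$ flipped drastically changes the distribution of the flips on the suffix (for RLS it forces the unique flip into $[i+2..n]$). Since the theorem's flagship applications are exactly RLS and $k$-bit operators, your proof as written does not cover them. The invariant is nonetheless true, for the reason the paper gives: the \emph{pre-mutation} free bits are uniform and independent of the entire flip pattern, so conditionally on the pattern the post-mutation suffix is the XOR of a uniform string with a fixed vector, hence iid uniform and independent of the pattern, however internally correlated the pattern may be --- in the paper's notation, $(x_{i+2},\dots,x_n) = (x^0_{i+2},\dots,x^0_n) \oplus (y_{i+2},\dots,y_n)$ with $y$ independent of the initial suffix. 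Consequently, conditioning on the acceptance event (a function of the pattern prefix and the history) together with the post-mutation values at positions $i+2,\dots,j+1$ (which determine the new level $j$) still leaves the bits at positions $j+2,\dots,n$ iid uniform, because given the pattern the post-mutation free bits are independent across positions. With this repair your induction goes through in the full generality of the theorem; without it, the key step fails for every mutation operator whose flip coordinates are dependent.
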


\begin{proof}
  For all $i \in [0..n]$, denote by $T_i^0$ the runtime of the algorithm when starting with a random search point of fitness exactly $i$. Note that if $x$ is such a random search point, then $x_j = 1$ for $j \in [1..i]$, $x_{i+1} = 0$, and for $j \in [i+2..n]$ the $x_j$ are independent random variables uniformly distributed in $\{0,1\}$. Let $T_i^{\rand}$ denote the runtime of the algorithm when starting with a random search point $x$ of fitness at least $i$, that is, we have $x_j = 1$ for $j \in [1..i]$ and for $j \in [i+1..n]$ the $x_j$ are independent random variables uniformly distributed in $\{0,1\}$. 
  
  Trivially, we have $T_n^0 = T_n^{\rand} = 0$. The main technical insight is that due to the random initialization in the algorithm, we have 
  \begin{equation}
  T_i^0 = \Geom(q_i) + T_{i+1}^{\rand} \label{eq:insight}
  \end{equation}
  for $i < n$. Here $\Geom(q_i)$ describes the waiting time for an improvement when having a search point of fitness exactly $i$. At the moment when this improvement happens, the bits with index $i+2$ or higher are still independent and uniformly distributed binary random variables, since their value cannot have had any influence on the run of the algorithm. Formally speaking, we have $(x_{i+2}, \dots, x_n) = (x^0_{i+2}, \dots, x^0_n) \oplus (y_{i+2}, \dots, y_n)$, where $x^0$ denotes the random initial search point and $y = (y_{i+2}, \dots, y_n)$ describes which bits have changed compared to the initial bit-string. Note that $y$ is independent of $(x^0_{i+2}, \dots, x^0_n)$. Hence since $(x^0_{i+2}, \dots, x^0_n)$ is a random binary string, also $(x_{i+2}, \dots, x_n) = (x^0_{i+2}, \dots, x^0_n) \oplus (y_{i+2}, \dots, y_n)$ is a random binary string.
  
  With~\eqref{eq:insight} we easily compute  
  \begin{align*}
  T_i^{\rand} &= X_i T_i^0 + (1-X_i) T_{i+1}^{\rand}\\
  &= X_i (\Geom(q_i) + T_{i+1}^{\rand}) + (1-X_i) T_{i+1}^{\rand}\\
  &= X_i \Geom(q_i) + T_{i+1}^{\rand},
  \end{align*}
  where $X_i$ is a uniformly distributed binary random variable independent of any randomness used in the other distributions. From the latter equation, a simple induction proves the claim. Note that the runtime of the algorithm is $T_0^{\rand}$.
\end{proof}

As observed in~\cite{DoerrJWZ13} and exploited for a fixed-budget analysis, an analogous result is valid for the time $T^{\to a}$ to first reach a search point of fitness at least $a \in [0..n]$, namely $T^{\to a} \sim \sum_{i=0}^{a-1} X_i \cdot \Geom(q_i)$; note that in Lemma~5 of~\cite{DoerrJWZ13}, the range of the sum starts at $1$, but this clearly is a typo. Since this extension to time-to-target runtimes can be important (see~\cite{CarvalhoD17}), we quickly make this point precise even if in the remainder of this work we restrict ourselves to the traditional runtime analysis target of determining the time to find the optimum.
\begin{corollary}
  In the notation of Theorem~\ref{thmLO}, the time $T^{\to a}$ the algorithm takes to find a solution of fitness at least $a$ is 
  \[T^{\to a} \sim \sum_{i=0}^{a-1} X_i \cdot \Geom(q_i)\]
  for all $a \in [0..n]$.
\end{corollary}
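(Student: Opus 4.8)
The plan is to mimic the proof of Theorem~\ref{thmLO}, simply truncating the recursion at fitness level $a$ instead of at $n$. First I would introduce the obvious target-$a$ analogues of the two quantities from that proof: let $T_i^{0,\to a}$ denote the time the algorithm needs to first reach fitness at least $a$ when started from a random search point of fitness exactly $i$, and let $T_i^{\rand,\to a}$ denote the same quantity when started from a random search point of fitness at least $i$. Since the algorithm is started from a uniformly random point (of fitness at least $0$), the object we want to understand is $T^{\to a} = T_0^{\rand,\to a}$.

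The key observation is that nothing in the dynamics of the algorithm depends on the target $a$; the target only governs when we stop counting. Consequently, for every $i < a$ the very same reasoning that established~\eqref{eq:insight} in the proof of Theorem~\ref{thmLO} applies unchanged, and in particular the crucial fact that at the moment fitness exactly $i$ is improved the bits of index $i+2$ and higher are still independent and uniformly distributed. This yields
\[
  T_i^{0,\to a} = \Geom(q_i) + T_{i+1}^{\rand,\to a}
  \quad\text{and hence}\quad
  T_i^{\rand,\to a} = X_i\,\Geom(q_i) + T_{i+1}^{\rand,\to a},
\]
with the same family of uniform binary variables $X_i$ and independent geometric variables $\Geom(q_i)$ as before. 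The only change is the boundary condition: as soon as the fitness reaches $a$ the stopping criterion is already met, so $T_a^{\rand,\to a} = 0$ (and indeed $T_i^{0,\to a} = T_i^{\rand,\to a} = 0$ for all $i \ge a$), whereas in the theorem the recursion bottoms out only at level $n$.

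Unrolling the recurrence for $T_i^{\rand,\to a}$ from $i=0$ up to the base case at $i=a$ then telescopes to $T_0^{\rand,\to a} \sim \sum_{i=0}^{a-1} X_i \cdot \Geom(q_i)$, which is the claim; formally this is the same one-line induction used at the end of the proof of Theorem~\ref{thmLO}, merely terminated earlier. I do not expect any genuine obstacle here: the single subtle ingredient, namely the preservation of independence and uniformity of the still-untouched tail bits at the moment of an improvement, is inherited verbatim from Theorem~\ref{thmLO}, because the run of the algorithm up to the first time it attains fitness $a$ is distributed exactly as in the full optimization. The only point that deserves an explicit remark is precisely this one, that truncating the target does not alter the joint distribution of the $X_i$ and $\Geom(q_i)$ for $i < a$, and it holds because the target never influences the algorithm's trajectory below fitness $a$.
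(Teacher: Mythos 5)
Your proposal is correct and follows essentially the same route as the paper's own proof: the paper likewise defines the target-$a$ analogues of $T_i^0$ and $T_i^{\rand}$, observes that $T_a^0 = T_a^{\rand} = 0$ while the relations from Theorem~\ref{thmLO} hold unchanged for $i < a$, and concludes by the same induction. Your explicit remark that the target does not influence the trajectory below fitness $a$ is a sound (if implicit in the paper) justification for reusing those relations.
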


\begin{proof}
  Let $a \in [0..n]$. Denote by $T_i^0$, $T_i^{\rand}$ the analogues of the times defined in the previous proof, but for the target of reaching a fitness of at least $a$. Then $T_a^0 = T_a^{\rand} = 0$ and for $i < a$ the same relations hold as above. This proves the claim.
\end{proof}

Theorem~\ref{thmLO} allows to determine precisely the expected runtimes of many (1+1) algorithms. More importantly, it also allows to determine optimal parameters for the mutation operator.
\begin{itemize}
\item For the classic RLS heuristic always flipping a single random bit, we have $q_i = \frac 1n$ for all $i \in [0..n-1]$. Consequently, the expected runtime is $E[T] = 0.5 n^2$.  
As we shall show below, this is also the best static unbiased mutation operator for the \leadingones problem. 
\item Motivated by an analysis how evolutionary algorithms can solve problems with unknown solution length (that is, how many bits are actually relevant for the problem), in~\cite[Lemma~3.4]{DoerrDK17} an extension of the RLS heuristic was analyzed which, for a given sequence $p_1, p_2, \dots, p_n \in [0,1]$ with $\sum_{i=1}^n p_i \le 1$, as $1$-bit mutation flips the $i$-th bit with probability exactly $p_i$. For this mutation operator, an expected runtime of $\tfrac 12 \sum_{i=1}^n \frac 1 {p_i}$ was shown for instances of fixed length $n$. Hence for known solution length, there is no gain from flipping bits with position-dependent probabilities. This result is now an immediate consequence of Theorem~\ref{thmLO}. 	 
\item For the variant of RLS which randomly mixes the 1-bit and 2-bit flip operator, that is, with probability $P$ one random bit is flipped and with probability $1-P$ two (different) random bits are flipped, we have $q_i = \frac Pn + 2 (1-P) \frac{n-i-1}{n(n-1)}$. From this, a slightly tedious calculation gives an expected runtime of $E[T] = \frac{1}{4(1-p)} \ln(\frac{2-p}{p}) n^2 +o(n^2)$, see~\cite{LissovoiOW17} and note that the fact that the authors consider 2-bit flips with repetition (that is, with probability $1/n$ the same bit is flipped twice resulting in a copy of the parent) has no influence on the result apart from lower-order terms.
\item  When flipping $k$ random bits in a search point with fitness $i$, the probability $q(n,k,i)$ of obtaining a strictly better solution is $q(n,k,i) = \frac{k (n-i-1) \dots (n-i-k+1)}{n (n-1) \dots (n-k+1)}$. Consequently, $q(n,k,i) \le q(n,k+1,i)$ if and only if $i \le \frac{n-k}{k+1}$. This gives that the best (that is, largest) value for $q_i$ is obtained from flipping $k(n,i) := \lfloor \frac {n} {i+1} \rfloor$ bits when the current fitness is $i$.\footnote{In an earlier version of this work, we said that $\lfloor \frac{n+1}{i+1} \rfloor$ would be the optimal number of bits to flip in a search point with fitness $i$. Apart from the trivial case $i=0$, where the previous formulation gives $n+1$ instead of the desired value of $n$, both versions are correct. For all $i >0$ such that two definitions differ, both numbers of bits to be flipped give the same probability for a fitness improvement. Thanks to Carola Doerr and Markus Wagner for pointing me to the new formula from their forthcoming work~\cite{DoerrW18}.} This fitness-dependent choice of the mutation operator gives a runtime of approximately $0.39 n^2$. Since any unary unbiased mutation operator (see~\cite{LehreW12}) is the convex combination of $k$-bit flip operators (see~\cite{DoerrDY16gecco} or~\cite{DoerrKLW13tcs}), this algorithm also is the fastest unbiased (1+1) algorithm for \leadingones.
\item For the \oea, that is, when the mutation operator is standard-bit mutation with mutation rate $p$, we have $q_i = q_i(p) = (1-p)^{i} p$ for all $i \in [0..n-1]$. This yields an expected runtime of $E[T] = \frac{1}{2p^2}((1-p)^{1-n} - (1-p))$. For the standard choice $p=\frac 1n$, this is $E[T] = \frac 12 (e-1-o(1)) n^2 \approx 0.86n^2$. A better (and the asymptotically best among all static rates) expected runtime is obtained from using $p \approx 1.59/n$, giving $E[T] \approx 0.77n^2$. The best (fitness-dependent) mutation rate is using $p_i = \frac{1}{i+1}$ when the current fitness is $i$, giving a runtime of $E[T] = \frac e4 n^2 \pm O(n) \approx 0.77n^2$. All these results are from~\cite{BottcherDN10}. Apparently unaware of this work, the expected runtime in the case $p=\frac 1n$ was again determined in~\cite{DuASFY17}. 
\item Since the heavy-tailed mutation operator proposed in~\cite{DoerrLMN17} applies standard-bit mutation with a randomly chosen mutation rate, the $q_i$ are convex combinations of the $q_i(p)$ computed above. This again determines the expected runtime of the \oea with this mutation operator, however, a simple closed formula as for standard-bit mutation with fixed rate is not known.
\item When allowing position-dependent mutation rates for the \oea, that is, the $i$-th bit is flipped with probability $p_i$ independently for all $i \in [1..n]$, then an expected runtime of $E[T] = \frac 12 \sum_{i=1}^n (p_i \prod_{j=1}^{i-1} (1-p_j))^{-1}$ was shown in~\cite{DoerrDK17}. This is minimized by taking $p_i = \frac 1i$ for all $i$, which yields $E[T] = 0.5 n^2$. 
\end{itemize}

We now prove the results promised above that the $1$-bit mutation operator as used by RLS is the (unique) best unbiased static mutation operator for the (1+1) algorithm scheme. We say that a (1+1) algorithm uses a static mutation operator if for all $x \in \{0,1\}^n$ the distribution of the offspring generated from $x$ is the same throughout the run of the algorithm. 

\begin{theorem}
  For any (1+1) algorithm using static unbiased mutation, the expected runtime on the \leadingones function is at least $0.5 n^2$. Moreover, RLS is the only such algorithm satisfying this bound.
\end{theorem}

\begin{proof}
  Consider a static unbiased mutation operator. As above, there are $r_0, r_1, \dots, r_{n} \in [0,1]$ such that $\sum_{k=0}^n r_k = 1$ and such that the mutation operator (unchanged throughout the run of the algorithm) can be written as ``sample $k \in [0..n]$ with probability $r_k$ and then flip exactly $k$ random bits''. 
  
  Let again $q(n,k,i) = \frac{k (n-i-1) \dots (n-i-k+1)}{n (n-1) \dots (n-k+1)}$ be the probability of obtaining a strictly better solution from a search point of fitness $i$ by flipping $k$ bits. We apply Theorem~\ref{thmLO} with $q_i = \sum_{k=0}^{n-1} r_k q(n,k,i)$ and obtain an expected optimization time of $E[T] = \frac 12 \sum_{i=0}^{n-1} \frac 1 {q_i}$. We show that this expression, viewed as function of the $r_k$, has a unique minimum for $r_1 = 1$ (and hence $r_k = 0$ for all $k \neq 1$). 
  
  It is clear that flipping zero bits can never lead to an improvement. Hence moving any positive mass on $r_0$ to $r_1$ would strictly improve the algorithm. Without loss of generality, we thus assume in the following that $r_0 = 0$.
  
  We first show that, regardless of the remaining $r_k$, we have $\sum_{i=0}^{n-1} q_i = 1$. This is, naturally, equivalent to saying that \begin{equation}
  \sum_{i=0}^{n-1} q(n,k,i) = 1 \label{eq:eins}
\end{equation}
for all $k \in [1..n]$. Since we have a precise expression for the $q(n,k,i)$, equation~\eqref{eq:eins} in principle could be shown via Faulhaber's formula~\cite{Faulhaber31}. Fortunately, a much simple probabilistic argument can be applied. Let $A_i$ be the event that in an application of the $k$-bit mutation operator the $i$-th bit is flipped and that no bit $j$ with $j < i$ is flipped. Then $\Pr[A_i] = q(n,k,i-1)$. Trivially, the $A_i$ are disjoint events that cover the whole probability space. Hence 
\[1 = \Pr[A_1 \cup \dots \cup A_{n}] = \sum_{i=1}^{n} \Pr[A_i] = \sum_{i=0}^{n-1} q(n,k,i)\] 
for all $k \in [0..n-1]$ and thus $\sum_{i=0}^{n-1} q_i = 1$. 
  
  Next, we recall that among all $q_0, \dots, q_{n-1}$ with $\sum_{i=0}^{n-1} q_i = 1$, the sum $\sum_{i=0}^{n-1} \frac{1}{q_i}$ of the reciprocals is minimal if and only if the $q_i$ are all equal. Since $i \mapsto q(n,k,i)$ is non-increasing, so is $i \mapsto q_i$, and the only way to have the $q_i$ all equal is that for all $k \in [1..n]$ the function $i \mapsto r_k q(n,k,i)$ is constant. Since only for $k=1$ the function $i \mapsto q(n,k,i)$ is constant, we have $r_k = 0$ for all $k \neq 1$, and thus $r_1 = 1$ as claimed. 
\end{proof}

\subsection{Conclusion on Exact Runtime Distributions}

The results on \leadingones, in particular, how the knowledge of the precise distribution easily yields optimal parameter values, show that understanding the precise runtime distribution would be highly desirable. 

However, it has to be noticed that the results presented above are limited to very restricted settings and it is highly unlikely that they can be extended significantly. For example, that the exact runtime analysis of randomized local search on \onemax can be extended to other algorithms appears hard to believe given that for the runtime of the \oea on \onemax, despite significant efforts~\cite{Rudolph97,DoerrFW10,DoerrFW11,Sudholt13,HwangPRTC18}, not even an exact value for the expectation is known. Similarly, the results for \leadingones depend crucially on the facts that only (1+1) algorithms are regarded and that the initial individual is random. For a different initialization, none of the results would hold and also analogous results determining the exact runtime distribution are not in sight. 

For this reason, the approach discussed in the following is promising: To not try to find the exact distribution, but to ask for distributions which are upper bounds in a very strong sense, namely in the sense of stochastic domination. With this notion we will then, e.g., be able to say that the runtime distribution of the \oea on \onemax is dominated by $\sum_{i=1}^n \Geom(\frac i{en})$ or that the runtime distribution for an arbitrary (1+1) algorithm on $\leadingones$ is dominated by $\sum_{i=0}^{n-1} \Geom(q_i)$ regardless of how the initial solution is chosen. 

We note without further detail that another way of weakening the aim of an exact runtime distribution is to give a limiting distribution only. This has been done for the runtime of RLS and \oea on \onemax and the \needle function~\cite{GarnierKS99} and on the \leadingones function~\cite{Ladret05}. For example, it is shown in~\cite{Ladret05} that the runtime $T_n$ of the \oea with mutation rate $c/n$ on the $n$-dimensional \leadingones function has the property that 
\[\frac{T_n - \frac{e^c-1}{2c^2}}{n^{3/2}}\]
converges in distribution to a centered Gaussian random variable with variance 
\[\sigma^2 = \frac{3(e^{2c}-1)}{8c^3}.\]
The results on the runtime of the \oea on \onemax and \leadingones have been reproven with different methods and made significantly more precise in~\cite{HwangPRTC18}.

\section{Stochastic Domination}\label{secdom}

In this section, we recall the definition of \emph{stochastic domination} and collect a few known properties of this notion. For an extensive treatment of various forms of stochastic orders, we refer to~\cite{MullerS02}.

\subsection{Definition of Stochastic Domination}

Stochastic domination is usually defined as follows.

\begin{definition}[Stochastic domination]
  Let $X$ and $Y$ be random variables not necessarily defined on the same probability space. We say that $Y$ stochastically dominates $X$, written as $X \preceq Y$, if for all $\lambda \in \R$ we have $\Pr[X \le \lambda] \ge \Pr[Y \le \lambda]$.
\end{definition}

If $Y$ dominates $X$, then the cumulative distribution function of $Y$ is point-wise not larger than the one of $X$. The definition of domination is equivalent to ``$\forall \lambda \in \R : \Pr[X \ge \lambda] \le \Pr[Y \ge \lambda]$'', which shows more clearly why we feel that $Y$ is at least as large as $X$. 

Concerning nomenclature, we remark that some research communities in addition require that the inequality is strict for at least one value of $\lambda$. Hence, intuitively speaking, $Y$ is strictly larger than $X$. From the mathematical perspective, this appears not very practical, and from the computer science perspective it is not clear what can be gained from this alternative definition. Consequently, our definition above is  more common in computer science (though, e.g., \cite{ZhouLLH12} also use the alternative definition). 

One advantage of comparing two distributions via the notion of domination is that this makes a statement over the whole domain of the distributions, including ``rare events'' on the tails. If the runtime $T_A$ of some algorithm $A$ is dominated by the runtime $T_B$ of algorithm $B$, then not only  $A$ is better than $B$ in average, but also exceptionally large runtimes occur less frequent when using $A$.

A second advantage is that domination is invariant under monotonic re-scaling. Imagine that running an algorithm for time $t$ incurs some cost $c(t)$. We may clearly assume that $c$ is monotonically increasing, that is, that $t_1 \le t_2$ implies $c(t_1) \le c(t_2)$. Then $T_A \preceq T_B$ implies $c(T_A) \preceq c(T_B)$. Hence changing the cost measure does not change our feeling that algorithm $A$ is better than $B$. Note that this is different for expectations. We may well have $E[T_A] < E[T_B]$, but $E[c(T_A)] > E[c(T_B)]$. 

\subsection{Properties of Stochastic Domination}

We collect a few useful properties of stochastic domination. We start with three elementary observations.

\begin{lemma}\label{lprobdomexp}
  If $X \preceq Y$, then $E[X] \le E[Y]$.
\end{lemma}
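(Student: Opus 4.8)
The statement to prove is Lemma~\ref{lprobdomexp}: if $X \preceq Y$, then $E[X] \le E[Y]$.

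The plan is to express the expectations as integrals of the tail probabilities and then compare them pointwise. The key identity I would use is the layer-cake (tail-sum) representation of expectation. For a non-negative random variable $Z$, one has $E[Z] = \int_0^\infty \Pr[Z > t]\,dt$; more generally, for a real-valued random variable one can write $E[Z] = \int_0^\infty \Pr[Z > t]\,dt - \int_{-\infty}^0 \Pr[Z < t]\,dt$. Since the paper's runtime applications concern non-negative (indeed $\N$-valued) random variables, it is cleanest to state and use the non-negative version, where the whole expectation is captured by a single tail integral.

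First I would recall the definition of domination in its equivalent tail form, already noted in the excerpt right after the definition: $X \preceq Y$ is equivalent to $\Pr[X \ge \lambda] \le \Pr[Y \ge \lambda]$ for all $\lambda \in \R$. From this, the strict-inequality version $\Pr[X > t] \le \Pr[Y > t]$ follows for every $t$ as well (e.g.\ by taking a limit over $\lambda \downarrow t$, or directly since $\{Z > t\} = \bigcup_{\lambda > t}\{Z \ge \lambda\}$). Then I would substitute into the tail-integral formula:
\[
E[X] = \int_0^\infty \Pr[X > t]\,dt \le \int_0^\infty \Pr[Y > t]\,dt = E[Y],
\]
where the middle inequality is just monotonicity of the integral applied to the pointwise bound $\Pr[X > t] \le \Pr[Y > t]$. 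This gives the claim immediately in the non-negative case, which is the relevant one here.

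The only genuine subtlety is justifying the tail-sum/integral representation and handling the general real-valued case and possible non-integrability. The main obstacle, such as it is, is bookkeeping: if one wants the fully general statement one must split $Z = Z^+ - Z^-$ and argue on each part separately, and one should note that the conclusion should be read with the convention that $E[X] \le E[Y]$ holds whenever $E[Y]$ is finite (and is vacuous or interpreted in $[-\infty,\infty]$ otherwise), since domination does not force integrability of $X$. For the integer-valued runtime setting the argument is even simpler: one can replace the integral by the sum $E[Z] = \sum_{k \ge 1} \Pr[Z \ge k]$ and compare term by term using $\Pr[X \ge k] \le \Pr[Y \ge k]$, avoiding measure-theoretic fuss entirely. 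I expect the author's proof to be a one- or two-line version of exactly this tail-comparison argument.
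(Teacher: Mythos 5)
Your proof is correct: the paper states this lemma without proof, listing it as one of three ``elementary observations,'' and your tail-representation argument (comparing $\Pr[X > t] \le \Pr[Y > t]$ pointwise under the integral, or term by term in the sum $E[Z] = \sum_{k \ge 1} \Pr[Z \ge k]$ for the integer-valued runtime setting) is exactly the standard argument the paper implicitly relies on. Your additional care about the real-valued case and non-integrability is sound and goes beyond what the paper needs, since all its applications concern $\N$-valued runtimes.
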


\begin{samepage}
\begin{lemma}
  The following two conditions are equivalent.\nopagebreak
  \begin{enumerate}
	  \item $X \preceq Y$.
	  \item For all monotonically increasing functions $f \colon \R \to \R$, we have \[E[f(X)] \le E[f(Y)].\]
  \end{enumerate}
\end{lemma}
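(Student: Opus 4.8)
The plan is to prove the two implications separately, with the reverse direction being the trivial one and the forward direction resting on a coupling argument.

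For the implication (ii)~$\Rightarrow$~(i), I would simply instantiate the assumed inequality with a well-chosen family of increasing functions. For each fixed $\lambda \in \R$, the indicator $f_\lambda = 1_{[\lambda,\infty)}$ (that is, $f_\lambda(t) = 1$ if $t \ge \lambda$ and $f_\lambda(t) = 0$ otherwise) is monotonically increasing, and $E[f_\lambda(X)] = \Pr[X \ge \lambda]$ as well as $E[f_\lambda(Y)] = \Pr[Y \ge \lambda]$. Thus the hypothesis $E[f_\lambda(X)] \le E[f_\lambda(Y)]$ yields $\Pr[X \ge \lambda] \le \Pr[Y \ge \lambda]$ for every $\lambda$, which by the reformulation of domination recorded above is exactly $X \preceq Y$.

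For the harder direction (i)~$\Rightarrow$~(ii), the key idea is to replace $X$ and $Y$ by a coupled pair on a common probability space that is ordered pointwise. Concretely, let $F_X$ and $F_Y$ be the cumulative distribution functions and define the generalized inverses $F_X^{-1}(u) = \inf\{t \in \R : F_X(t) \ge u\}$ and likewise $F_Y^{-1}$. Drawing a single $U$ uniformly on $(0,1)$ and setting $\tilde X = F_X^{-1}(U)$ and $\tilde Y = F_Y^{-1}(U)$, the standard quantile-transform identity $F^{-1}(u) \le t \iff u \le F(t)$ (valid by right-continuity of $F$) shows $\tilde X \sim X$ and $\tilde Y \sim Y$. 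Since domination gives $F_X(t) = \Pr[X \le t] \ge \Pr[Y \le t] = F_Y(t)$ for all $t$, we have $\{t : F_Y(t) \ge u\} \subseteq \{t : F_X(t) \ge u\}$, so the infima satisfy $\tilde X = F_X^{-1}(U) \le F_Y^{-1}(U) = \tilde Y$ surely. Monotonicity of $f$ then gives $f(\tilde X) \le f(\tilde Y)$ pointwise, and because expectations depend only on the distribution, $E[f(X)] = E[f(\tilde X)] \le E[f(\tilde Y)] = E[f(Y)]$.

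The main obstacle to keep an eye on is measure-theoretic bookkeeping rather than any genuine difficulty: I would need to confirm that $F^{-1}(U)$ indeed has distribution function $F$ (via the identity above) and that the arbitrary monotone $f$ is measurable with both expectations well defined, interpreting the conclusion in the extended reals should either side be infinite. Once the pointwise-ordered coupling is in hand, all of this is routine, which is precisely why the coupling formulation is preferable to a direct integration-by-parts computation that would otherwise have to track the integrability of a general monotone $f$.
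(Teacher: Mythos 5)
Your proof is correct. Note, however, that the paper offers no proof to compare against: it lists this lemma among the ``elementary observations'' on stochastic domination and states it without proof, so the only evaluation possible is of your argument on its own terms. Both of your directions are sound: the reverse implication via indicator functions $1_{[\lambda,\infty)}$ correctly uses the reformulation $\forall \lambda : \Pr[X \ge \lambda] \le \Pr[Y \ge \lambda]$ that the paper records right after the definition, and the forward implication via the quantile coupling $\tilde X = F_X^{-1}(U)$, $\tilde Y = F_Y^{-1}(U)$ is the standard construction and is carried out correctly (the inclusion $\{t : F_Y(t) \ge u\} \subseteq \{t : F_X(t) \ge u\}$ does give $\tilde X \le \tilde Y$ pointwise, and measurability of monotone $f$ is automatic). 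It is worth observing that your coupling step proves more than the lemma asks: it is exactly the nontrivial direction of the paper's Theorem~\ref{tprobdomcou} (domination is equivalent to the existence of a pointwise-ordered coupling), which the paper also states without proof; once that theorem is granted, your direction (i)~$\Rightarrow$~(ii) is a one-line consequence, and conversely your write-up supplies the missing proof of the coupling theorem as a by-product. A coupling-free alternative for the forward direction, in case you want one, is the layer-cake identity $E[g(X)] = \int_0^\infty \Pr[g(X) > s]\,ds$ applied to $g = f - \inf f$ (or to the positive and negative parts of $f$): since $f$ is increasing, each superlevel set $f^{-1}((s,\infty))$ is an up-set of $\R$, so domination compares the integrands pointwise; but this route forces exactly the integrability bookkeeping you rightly flagged, which the coupling argument sidesteps. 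One small caveat you already half-address: for a general monotone $f$ the expectations need not be defined, so the lemma should be read as asserting the inequality whenever both sides exist in the extended sense, which your pointwise bound $f(\tilde X) \le f(\tilde Y)$ delivers via $E[f(\tilde X)^+] \le E[f(\tilde Y)^+]$ and $E[f(\tilde X)^-] \ge E[f(\tilde Y)^-]$.
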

\end{samepage}

\begin{lemma}\label{Xlprobtrivialdom}
Let $X$ and $Y$ be random variables.
\begin{enumerate}
  \item If $X$ and $Y$ are defined on the same probability space and $X \le Y$, then $X \preceq Y$.
  \item If $X$ and $Y$ are identically distributed, then $X \preceq Y$.
\end{enumerate}
\end{lemma}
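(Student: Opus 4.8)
The plan is to verify both statements directly from the definition of stochastic domination, which asks only that $\Pr[X \le \lambda] \ge \Pr[Y \le \lambda]$ hold for every $\lambda \in \R$. Since both hypotheses are about as close to the definition as one can get, the work reduces to a single observation per part, and no deeper machinery (such as Lemma~\ref{lprobdomexp} or the coupling characterization) is needed.

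For part~(i), I would fix an arbitrary $\lambda \in \R$ and compare the two events $\{Y \le \lambda\}$ and $\{X \le \lambda\}$, which now live on the same probability space. The pointwise hypothesis $X \le Y$ gives the event inclusion $\{Y \le \lambda\} \subseteq \{X \le \lambda\}$: whenever $Y(\omega) \le \lambda$ we also have $X(\omega) \le Y(\omega) \le \lambda$. Monotonicity of the probability measure then yields $\Pr[Y \le \lambda] \le \Pr[X \le \lambda]$, and since $\lambda$ was arbitrary this is precisely $X \preceq Y$. For part~(ii), if $X$ and $Y$ are identically distributed then their cumulative distribution functions coincide, so $\Pr[X \le \lambda] = \Pr[Y \le \lambda]$ for all $\lambda$; in particular the weak inequality $\Pr[X \le \lambda] \ge \Pr[Y \le \lambda]$ holds, giving $X \preceq Y$.

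There is essentially no genuine obstacle here; the only point worth a moment's care is directional. In~(i) the inclusion runs the ``correct'' way precisely because domination is a one-sided comparison of lower-tail probabilities, so the pointwise bound $X \le Y$ translates into the reversed inequality of cdf values; and in~(ii) it is exactly the one-sidedness of the definition that lets equality of distributions suffice, so that identically distributed variables dominate one another in both directions. I would also remark that in~(i) it is enough for $X \le Y$ to hold almost surely rather than everywhere, since a null event does not affect the event inclusion up to a set of measure zero.
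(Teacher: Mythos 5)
Your proof is correct and is exactly the standard direct verification that the paper has in mind: the paper states this lemma without proof as one of its ``elementary observations,'' and your argument---the event inclusion $\{Y \le \lambda\} \subseteq \{X \le \lambda\}$ for part~(i) and equality of cumulative distribution functions for part~(ii)---is the canonical way to fill that gap. Your remark that almost-sure inequality $X \le Y$ suffices in~(i) is also accurate and a sensible refinement.
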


The following non-trivial lemma will be needed in our proof of the extended fitness level theorem. It was first proven a in slightly weaker form in~\cite{DoerrHK12} and~\cite{Doerr11bookchapter}. The current version is from~\cite{Doerr18bookchapter}.

\begin{lemma}\label{lprobdomind}
  Let $X_1,\dots,X_n$ be arbitrary discrete random variables. Let $X^*_1,\dots,X^*_n$ be discrete random variables that are mutually independent.  Assume that for all $i \in [1..n]$ and all $x_1,\dots, x_{i-1}$ with $\Pr[X_1=x_1,\dots,X_{i-1}=x_{i-1}]>0$, we have \[\Pr[X_i\ge k \mid X_1=x_1,\dots,X_{i-1}=x_{i-1}] \le \Pr[X_i^* \ge k]\] for all $k \in \R$, that is, $X_i^*$ dominates $(X_i \mid X_1=x_1,\dots,X_{i-1}=x_{i-1})$. Then \[\sum_{i=1}^n X_i \preceq \sum_{i=1}^n X_i^*.\]
\end{lemma}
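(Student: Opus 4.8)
The plan is to prove the stronger, pointwise statement via a coupling and then invoke Lemma~\ref{Xlprobtrivialdom}(i), which says that almost-sure domination of random variables on a common probability space implies stochastic domination. Concretely, I would construct random variables $\hat X_1, \dots, \hat X_n$ and $\hat X_1^*, \dots, \hat X_n^*$ on one probability space such that (a) $(\hat X_1, \dots, \hat X_n)$ has the same joint distribution as $(X_1, \dots, X_n)$, (b) the $\hat X_i^*$ are mutually independent with $\hat X_i^* \sim X_i^*$, and (c) $\hat X_i \le \hat X_i^*$ almost surely for every $i$. From (c) we get $\sum_i \hat X_i \le \sum_i \hat X_i^*$ pointwise, hence $\sum_i \hat X_i \preceq \sum_i \hat X_i^*$ by Lemma~\ref{Xlprobtrivialdom}(i); by (a) and (b) these sums are distributed like $\sum_i X_i$ and $\sum_i X_i^*$, and since domination depends only on the marginal distributions of the two sums, the claim follows.

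The engine of the coupling is the quantile transform. For a distribution with CDF $F$, write $F^{-1}(u) = \inf\{t \in \R : F(t) \ge u\}$ for the generalized inverse. The two facts I would use are: first, if $U$ is uniform on $[0,1]$ then $F^{-1}(U)$ has CDF $F$; second, the elementary monotonicity statement that if two CDFs satisfy $F(t) \ge G(t)$ for all $t$ (i.e.\ the law of $G$ dominates the law of $F$), then $F^{-1}(u) \le G^{-1}(u)$ for all $u \in (0,1)$. The hypothesis of the lemma is exactly a pointwise inequality of this form between the conditional CDF of $X_i$ given the past and the CDF of $X_i^*$.

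Now take independent uniforms $U_1, \dots, U_n$ on $[0,1]$. Define $\hat X_1 = F_{X_1}^{-1}(U_1)$, and recursively, once $\hat X_1 = x_1, \dots, \hat X_{i-1} = x_{i-1}$ are realized, let $F_i^{x_1,\dots,x_{i-1}}$ be the conditional CDF of $X_i$ given $X_1 = x_1, \dots, X_{i-1} = x_{i-1}$ (well-defined since everything is discrete and we only ever condition on positive-probability events) and set $\hat X_i = (F_i^{x_1,\dots,x_{i-1}})^{-1}(U_i)$. Because $U_i$ is independent of $U_1, \dots, U_{i-1}$, the conditional law of $\hat X_i$ given the realized past matches the conditional law of $X_i$; telescoping these conditional laws yields property (a). Setting $\hat X_i^* = F_{X_i^*}^{-1}(U_i)$ makes each $\hat X_i^*$ a function of $U_i$ alone, so property (b) holds by independence of the $U_i$. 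Finally, since the same $U_i$ drives both $\hat X_i$ and $\hat X_i^*$, the monotonicity fact applied to the hypothesis $F_i^{\hat X_1,\dots,\hat X_{i-1}}(t) \ge F_{X_i^*}(t)$ gives $\hat X_i = (F_i^{\hat X_1,\dots,\hat X_{i-1}})^{-1}(U_i) \le F_{X_i^*}^{-1}(U_i) = \hat X_i^*$, which is property (c).

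I expect the main obstacle to be packaging the coupling so that all three requirements hold at once: reproducing the dependent joint law of the $X_i$ (which forces the $\hat X_i$ to be built from conditional quantiles along the realized past) while simultaneously keeping the $\hat X_i^*$ independent (which forces each to depend on a single coordinate of the randomness). The resolution — driving $\hat X_i$ and $\hat X_i^*$ by the same uniform $U_i$, with the $U_i$ independent across $i$ — is what reconciles this tension, and the pointwise domination then falls out for free from the monotonicity of the quantile function. An alternative I would keep in reserve is a direct induction on $n$, conditioning on $X_1$ and applying the $(n-1)$-variable case to the conditional laws, but the coupling argument is cleaner and avoids re-deriving the conditional hypotheses at each step.
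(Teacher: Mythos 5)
Your proof is correct, and it takes a genuinely different route from the paper -- which, in fact, contains no proof of Lemma~\ref{lprobdomind} at all: the lemma is quoted as known, from~\cite{DoerrHK12} and~\cite{Doerr11bookchapter} in slightly weaker form and from~\cite{Doerr18bookchapter} in the stated version. The proof in the cited source is purely distributional: by induction, one replaces the $X_i$ one at a time by their independent starred counterparts, decomposing $\Pr[\sum_i X_i \ge k]$ over the positive-probability histories $(x_1,\dots,x_{i-1})$ via the law of total probability and applying the domination hypothesis to the summand being swapped; no common probability space is ever constructed (your fallback alternative, induction on $n$ conditioning on $X_1$, is essentially this argument). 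Your conditional quantile coupling -- driving $\hat X_i$, built from the conditional quantile along the realized past, and $\hat X_i^* = F_{X_i^*}^{-1}(U_i)$ by the \emph{same} uniform $U_i$ -- proves strictly more: you get a coupling of the full vectors with $\hat X_i \le \hat X_i^*$ coordinatewise, hence $g(X_1,\dots,X_n) \preceq g(X_1^*,\dots,X_n^*)$ for every coordinatewise monotone $g$, not only the sum. That surplus is genuinely relevant to this paper: the counter-example section shows that $X_i \preceq Y_i$ alone does not yield $\max\{X_1,X_2\} \preceq \max\{Y_1,Y_2\}$, and your coupling shows the conditional hypothesis of the lemma suffices to repair this; it is also precisely the style of argument the paper carries out by hand in the proof of Theorem~\ref{tprobdom}, and can be viewed as a constructive, conditional refinement of Theorem~\ref{tprobdomcou}. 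What the replacement proof buys instead is elementarity: summation over atoms only, no generalized inverses, and none of the (correctly handled, but necessary) bookkeeping your argument requires -- that realized pasts almost surely have positive probability so the conditional CDFs are defined, that $U_i \in (0,1)$ almost surely, and that the hypothesis, stated as $\Pr[X_i \ge k \mid \cdot\,] \le \Pr[X_i^* \ge k]$ for all $k \in \R$, is equivalent (let $k \downarrow t$) to the pointwise CDF inequality $F_i^{x_1,\dots,x_{i-1}}(t) \ge F_{X_i^*}(t)$ that your monotonicity fact for quantile functions consumes. Both arguments use discreteness only lightly; yours extends beyond the discrete case once conditional CDFs are replaced by regular conditional distributions.
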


The following result is again elementary. For discrete random variables, it is a special case of Lemma~\ref{lprobdomind}.

\begin{lemma}\label{lprobdomsum}
  Let $X_1, \dots, X_n$ be independent random variables defined over some common probability space. Let $Y_1, \dots, Y_n$ be independent random variables defined over a possibly different probability space. If $X_i \preceq Y_i$ for all $i \in [1..n]$, then \[\sum_{i=1}^n X_i \preceq \sum_{i=1}^n Y_i.\]
\end{lemma}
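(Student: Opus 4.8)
The plan is to reduce this statement about sums to a pointwise inequality by constructing, on a single probability space, coupled copies of all the $X_i$ and $Y_i$ that respect both the required marginals and the required independence structure. The tool for this is the quantile coupling: for a real-valued random variable $Z$ with cumulative distribution function $F_Z$, define its generalized inverse $F_Z^{-1}(u) = \inf\{t \in \R : F_Z(t) \ge u\}$ for $u \in (0,1)$; a routine verification shows that if $U$ is uniformly distributed on $(0,1)$, then $F_Z^{-1}(U)$ has the same distribution as $Z$.

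First I would translate the hypothesis into the language of distribution functions. By definition, $X_i \preceq Y_i$ means $F_{X_i}(\lambda) \ge F_{Y_i}(\lambda)$ for every $\lambda \in \R$. Since passing to generalized inverses reverses pointwise inequalities between distribution functions (if $F_{X_i} \ge F_{Y_i}$ pointwise, then $\{t : F_{Y_i}(t) \ge u\} \subseteq \{t : F_{X_i}(t) \ge u\}$, so the infima compare the other way), this yields $F_{X_i}^{-1}(u) \le F_{Y_i}^{-1}(u)$ for all $u \in (0,1)$ and all $i \in [1..n]$.

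Next I would build the coupling. Let $U_1, \dots, U_n$ be independent random variables, each uniform on $(0,1)$, defined on one common probability space, and set $\hat X_i := F_{X_i}^{-1}(U_i)$ and $\hat Y_i := F_{Y_i}^{-1}(U_i)$. Then $\hat X_i$ has the same distribution as $X_i$ and $\hat Y_i$ the same as $Y_i$; because the $\hat X_i$ are functions of the independent variables $U_i$, they are themselves mutually independent, and likewise the $\hat Y_i$. Hence $\sum_{i=1}^n \hat X_i$ has the same distribution as $\sum_{i=1}^n X_i$, and $\sum_{i=1}^n \hat Y_i$ the same as $\sum_{i=1}^n Y_i$. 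Crucially, the two copies within each pair share the \emph{same} $U_i$, so the inequality from the previous step gives $\hat X_i \le \hat Y_i$ almost surely, and therefore $\sum_{i=1}^n \hat X_i \le \sum_{i=1}^n \hat Y_i$ almost surely. By Lemma~\ref{Xlprobtrivialdom}(i) this implies $\sum_{i=1}^n \hat X_i \preceq \sum_{i=1}^n \hat Y_i$, and since the relation $\preceq$ is a statement purely about the marginal distributions of the two variables compared, the matching of distributions established above yields $\sum_{i=1}^n X_i \preceq \sum_{i=1}^n Y_i$.

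The only genuinely non-elementary ingredient is the quantile-coupling fact, and the one point that requires care is preserving independence while coupling: this is precisely why I would use independent uniforms $U_1, \dots, U_n$ rather than one shared $U$. I expect this to be the main thing to get right, as everything else is bookkeeping with distribution functions and Lemma~\ref{Xlprobtrivialdom}. In the discrete case the coupling can be bypassed altogether, as the author notes: independence makes the conditional tail probabilities appearing in Lemma~\ref{lprobdomind} equal to the unconditional ones, so choosing $X_i^* = Y_i$ there reduces the claim immediately to Lemma~\ref{lprobdomind}.
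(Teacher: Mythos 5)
Your proof is correct, but it does not follow the paper's route, because the paper in fact gives no explicit proof of this lemma at all: it declares the statement elementary and merely observes that for discrete random variables it is the special case of Lemma~\ref{lprobdomind} obtained by taking $X_i^* = Y_i$, where independence turns the conditional tail probabilities into unconditional ones --- exactly the reduction you sketch in your closing remark. Your quantile-coupling argument is thus more self-contained and strictly more general, covering arbitrary real-valued random variables rather than only discrete ones, and you correctly isolate the one delicate point: using independent uniforms $U_1, \dots, U_n$, one per index, so that the coupled families $(\hat X_1, \dots, \hat X_n)$ and $(\hat Y_1, \dots, \hat Y_n)$ retain the independence needed for their sums to be distributed as $\sum_i X_i$ and $\sum_i Y_i$; a single shared uniform would destroy this. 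It is worth noting that your construction is essentially a hands-on instance of the paper's Theorem~\ref{tprobdomcou} (domination is equivalent to the existence of a pointwise-ordered coupling): you could alternatively invoke that theorem once per pair $(X_i, Y_i)$ and pass to the product of the resulting probability spaces, but your explicit quantile coupling makes the preservation of independence transparent rather than burying it in a product-space construction. All individual steps check out: the reversal of the pointwise inequality between distribution functions under generalized inverses, the identity in distribution $F_Z^{-1}(U) \sim Z$, the almost-sure inequalities $\hat X_i \le \hat Y_i$, and the final appeal to Lemma~\ref{Xlprobtrivialdom} combined with the fact that $\preceq$ depends only on the marginal distributions of the two variables compared.
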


\subsection{Coupling}

Stochastic domination is tightly connected to \emph{coupling}. Coupling is an analysis technique that consists of suitably defining two unrelated random variables over the same probability space to ease comparing them. Let $X$ and $Y$ be two random variables not necessarily defined over the same probability space. We say that $(\tilde X, \tilde Y)$ is a \emph{coupling} of $(X,Y)$ if $\tilde X$ and $\tilde Y$ are defined over a common probability space and if $X$ and $X'$ as well as $Y$ and $Y'$ are identically distributed. This definition itself is very weak. $(X,Y)$ have many couplings and most of them are not interesting. So the art of coupling as a proof and analysis technique is to find a coupling of $(X,Y)$ that allows to derive some useful information. This is often problem-specific, however, also the following general result in known. It in particular allows to couple dominating random variables. We shall use it for this purpose in Section~\ref{seclower}.
\begin{theorem}\label{tprobdomcou}
  Let $X$ and $Y$ be random variables. Then the following two statements are equivalent.
  \begin{enumerate}
	\item $X \preceq Y$.
	\item There is a coupling $(\tilde X, \tilde Y)$ of $(X,Y)$ such that $\tilde X \le \tilde Y$.
\end{enumerate}
\end{theorem}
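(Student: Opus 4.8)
The plan is to prove the two implications separately, with the construction of an explicit coupling being the only nontrivial direction. The implication $(2) \Rightarrow (1)$ is immediate: if $(\tilde X, \tilde Y)$ is a coupling with $\tilde X \le \tilde Y$ pointwise, then by Lemma~\ref{Xlprobtrivialdom}(i) we have $\tilde X \preceq \tilde Y$, and since $\tilde X, \tilde Y$ are identically distributed to $X, Y$ respectively, the domination $X \preceq Y$ follows (domination depends only on the distributions). So the content lies entirely in $(1) \Rightarrow (2)$: given $X \preceq Y$, I must build a common probability space carrying $\tilde X \sim X$ and $\tilde Y \sim Y$ with $\tilde X \le \tilde Y$ everywhere.

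The standard device for this is the \emph{quantile coupling} via a single uniform random variable. First I would take $U$ uniformly distributed on $(0,1)$ as the source of all randomness on the common probability space. For a real-valued random variable $Z$ with cumulative distribution function $F_Z$, I would introduce the generalized inverse (quantile function) $F_Z^{-1}(u) := \inf\{ t \in \R : F_Z(t) \ge u \}$, and set $\tilde X := F_X^{-1}(U)$ and $\tilde Y := F_Y^{-1}(U)$. The first thing to check is that this reproduces the correct marginals, i.e. $\tilde X \sim X$ and $\tilde Y \sim Y$. This rests on the standard equivalence $F_Z^{-1}(u) \le t \iff u \le F_Z(t)$, from which $\Pr[\tilde X \le t] = \Pr[U \le F_X(t)] = F_X(t)$ follows because $U$ is uniform; the same argument handles $\tilde Y$. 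These are routine facts about the inverse-transform construction.

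The key step, and what makes the hypothesis $X \preceq Y$ finally do its work, is verifying $\tilde X \le \tilde Y$ pointwise. The domination assumption says $F_X(t) \ge F_Y(t)$ for all $t \in \R$, that is, the cumulative distribution function of $X$ lies pointwise above that of $Y$. I claim this forces $F_X^{-1}(u) \le F_Y^{-1}(u)$ for every $u \in (0,1)$: indeed, fix $u$ and let $t := F_Y^{-1}(u)$, so $F_Y(t) \ge u$; then $F_X(t) \ge F_Y(t) \ge u$, which by the defining infimum property of the quantile function gives $F_X^{-1}(u) \le t = F_Y^{-1}(u)$. Evaluating at $u = U$ yields $\tilde X \le \tilde Y$ surely, completing the construction.

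The main obstacle to watch is purely technical: the quantile functions must be handled carefully at jump points and at the endpoints $u \in \{0,1\}$, since cumulative distribution functions of the discrete random variables occurring in this paper are step functions rather than continuous. The clean way to avoid case distinctions is to work throughout with the two equivalences $F_Z^{-1}(u) \le t \iff u \le F_Z(t)$ and the monotonicity of $F_Z^{-1}$, which hold for arbitrary (in particular discrete) distributions, so that no separate argument for jumps is needed. I would state these two properties of the generalized inverse as preliminary observations and then let the marginal check and the monotonicity comparison above follow in a couple of lines each.
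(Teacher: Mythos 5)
Your proof is correct. Note, however, that the paper itself gives no proof of Theorem~\ref{tprobdomcou}: it is stated as a known result (the standard reference for such stochastic-order facts being~\cite{MullerS02}), so there is no in-paper argument to compare against. What you supply is the canonical proof of this classical equivalence: the easy direction $(2)\Rightarrow(1)$ via Lemma~\ref{Xlprobtrivialdom} together with the fact that domination depends only on the marginal distributions, and the quantile (inverse-transform) coupling $\tilde X = F_X^{-1}(U)$, $\tilde Y = F_Y^{-1}(U)$ for the substantive direction. Your handling of the only delicate point is also right: the equivalence $F_Z^{-1}(u) \le t \iff u \le F_Z(t)$ for $u \in (0,1)$ rests on the right-continuity of the cumulative distribution function, which holds for every distribution, so your remark that no separate treatment of jump points is needed is accurate --- both the marginal check $\Pr[\tilde X \le t] = \Pr[U \le F_X(t)] = F_X(t)$ and the pointwise comparison $F_X^{-1}(u) \le F_Y^{-1}(u)$ (deduced from $F_X \ge F_Y$ via $F_X(F_Y^{-1}(u)) \ge F_Y(F_Y^{-1}(u)) \ge u$) follow directly from that equivalence. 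One cosmetic remark: since the runtimes in this paper may in principle be infinite (e.g., when some $q_i = 0$ in Theorem~\ref{thmLO}), you could observe that the same construction works verbatim for extended-real-valued random variables, but this is a triviality and not a gap.
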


Without going into further detail, we note that coupling is not restricted to real-values random variables. In the analysis of population-based evolutionary algorithms, a powerful strategy to prove lower bounds is to couple the true population of the algorithm with the population of an artificial process without selection and by this overcome the difficult dependencies introduced by the variation-selection cycle of the algorithm. This was first done in~\cite{Witt06} and~\cite{Witt08} for the analysis of the \mpoea and an elitist steady-state GA. This technique then found applications for memetic algorithms~\cite{Sudholt09}, aging-mechanisms~\cite{JansenZ11tcs}, non-elitist algorithms~\cite{LehreY12}, multi-objective evolutionary algorithms~\cite{DoerrKV13}, and the \mplea~\cite{AntipovDFH18}.

\section{Domination-based Fitness Level Method}\label{secfitness}

In this section, we prove a version of the fitness level theorem that gives domination statements and we apply it to several classic problems. 

The fitness level method, invented by Wegener~\cite{Wegener01}, is one of the most successful early analysis methods in the theory of evolutionary computation. It builds on the idea of partitioning the search space into levels $A_i$, $i = 1, \dots, m$, which contain search points of strictly increasing fitness (that is, for all $i \in [1..m-1]$, $x \in A_i$, and $y \in A_{i+1}$, the fitness of $y$ is better than the one of $x$). 

We then try to show a lower bound $p_i$ for the probability that, given that the best-so-far search point is in $A_i$, we generate in one iteration a search point in a higher level (due to this reference to the best-so-far search point, the fitness level theorem is mostly used for elitist algorithms). From this data, the fitness level theorem gives the estimate 
\[E[T] \le \sum_{i=1}^{m-1} \frac 1 {p_i}\] 
for the time $T$ (that is, the number of iterations taken) to find a search point in the highest level. Traditionally it is assumed that this highest level contains only optimal solutions, but this restriction can be omitted and the theorem then gives bounds for the time needed to reach a solution having at least a certain fitness.

We also note that, in principle, there is no need to restrict the fitness level method to fitness levels. Any partition of the set of possible states of the algorithm into a sequence of subsets such that the algorithm cannot leave the current state to a state in a lower-order set would be sufficient. We analyze an application of this type in Section~\ref{sec:applLO}, but stick to the usual fitness level language in the remainder. 

\subsection{Domination-Version of the Fitness Level Theorem}

We shall now show that under the same assumptions, a much stronger statement is valid, namely that the runtime $T$ is dominated by $\sum_{i=1}^{m-1} \Geom(p_i)$, that is, a sum independent random variables following geometric distributions with success probabilities $p_i$ (see Section~\ref{sec:geomdef} for a definition of the geometric distribution). This result appears to be very natural and was used without proof in~\cite{ZhouLLH12}, yet its proof requires the non-trivial Lemma~\ref{lprobdomind}. 

\begin{theorem}[Domination version of the fitness level method]\label{tproblevel}
  Consider an elitist evolutionary algorithm $\calA$ maximizing a function $f \colon \Omega \to \R$. Let $A_1, \dots, A_m$ be a partition of $\Omega$ such that for all $i, j \in [1..m]$ with $i < j$ and all $x \in A_i$, $y \in A_j$, we have $f(x) < f(y)$. Set $A_{\ge i} := A_{i} \cup \dots \cup A_{m}$. Let $p_1, \dots, p_{m-1}$ be such that for all $i \in [1..m-1]$ we have that if the best search point in the current parent population is contained in  $A_i$, then independently of the past with probability at least $p_i$ the next parent population contains a search point in $A_{\ge i+1}$. 
  
  Denote by $T$ the (random) number of iterations $\calA$ takes to generate a search point in $A_m$. Then \[T \preceq \sum_{i=1}^{m-1} \Geom(p_i),\] where this sum is to be understood as a sum of independent geometric distributions. 
\end{theorem}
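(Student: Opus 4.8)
The plan is to decompose the total runtime into the times spent on the individual fitness levels and then invoke the dependency-tolerant domination result, Lemma~\ref{lprobdomind}, to replace these dependent waiting times by independent geometric variables. For $i \in [1..m-1]$, let $T_i$ denote the number of iterations in which the best individual of the current parent population lies in $A_i$. Since $\calA$ is elitist and the levels are strictly fitness-ordered, the level index of the best individual never decreases over the run. Hence the process passes through the levels monotonically, possibly skipping some, and the time to first reach $A_m$ satisfies $T = \sum_{i=1}^{m-1} T_i$ (a skipped level simply contributes $T_i = 0$). My goal is thus to show $\sum_{i=1}^{m-1} T_i \preceq \sum_{i=1}^{m-1} \Geom(p_i)$.

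Next I would establish the key conditional domination: for every level $i$ and every possible history of the process up to the moment the levels below $i$ have been left, $T_i \preceq \Geom(p_i)$. There are two cases. If upon first reaching a level of index at least $i$ the best individual already lies in $A_{\ge i+1}$, then level $i$ was skipped and $T_i = 0$, which is trivially dominated by any geometric variable (which is at least $1$); this follows from part~(i) of Lemma~\ref{Xlprobtrivialdom}. Otherwise the best individual enters $A_i$, and by the hypothesis of the theorem each subsequent iteration produces, independently of the past, a population with a search point in $A_{\ge i+1}$ with probability at least $p_i$; by elitism the process can only leave $A_i$ upwards. The number of iterations until this happens is therefore a waiting time whose per-step success probability is at least $p_i$, so $\Pr[T_i \ge k \mid \text{history}] \le (1-p_i)^{k-1} = \Pr[\Geom(p_i) \ge k]$ for all $k$. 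The crucial feature being exploited here is the ``independently of the past'' clause, which guarantees the per-step bound $p_i$ regardless of how the current point in $A_i$ was reached.

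Finally I would feed this into Lemma~\ref{lprobdomind} with $X_i := T_i$ and $X_i^* := \Geom(p_i)$, the latter taken mutually independent. The lemma requires the bound $\Pr[T_i \ge k \mid T_1 = x_1, \dots, T_{i-1} = x_{i-1}] \le \Pr[\Geom(p_i) \ge k]$, which is coarser than what I proved: conditioning on the full trajectory refines conditioning on $(T_1, \dots, T_{i-1})$, so the desired bound follows by averaging the per-trajectory bounds, since a convex combination of quantities each at most $\Pr[\Geom(p_i) \ge k]$ stays at most that value. Lemma~\ref{lprobdomind} then yields $T = \sum_{i=1}^{m-1} T_i \preceq \sum_{i=1}^{m-1} \Geom(p_i)$, as claimed.

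I expect the main obstacle to be precisely this dependency issue: the $T_i$ are genuinely dependent, because the time spent on level $i$ depends on where in $A_i$ the process entered, which is determined by the history. Consequently the simpler independent-summand result, Lemma~\ref{lprobdomsum}, does not apply, and one must carefully verify the hypothesis of Lemma~\ref{lprobdomind} — in particular the passage from full-history conditioning to conditioning only on the earlier waiting times, together with the correct bookkeeping for skipped levels. This is exactly the point where the otherwise intuitive result stops being ``totally obvious'' and relies on the non-trivial Lemma~\ref{lprobdomind}.
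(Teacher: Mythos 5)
Your proof is correct and takes essentially the same route as the paper's: the same decomposition of $T$ into per-level waiting times (the paper writes them as increments $T_{i+1}-T_{i}$ of the hitting times of $A_{\ge i}$, which coincide with your occupation times, with skipped levels contributing $0$ exactly as you handle them), conditional domination of each summand by $\Geom(p_i)$ given the past, and Lemma~\ref{lprobdomind} to conclude. Your explicit averaging step, passing from full-history conditioning to conditioning on $(T_1, \dots, T_{i-1})$, is a valid spelling-out of a detail the paper compresses into the single sentence that the increment is dominated by $\Geom(p_i)$ ``regardless what happened before time $T_i$''.
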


\begin{proof}
  Consider a run of the algorithm $\calA$. For all $i \in [1..m]$, let $T_i$ be the first time (iteration) when $\calA$ has generated a search point in $A_{\ge i}$. Then $T = T_m = \sum_{i=1}^{m-1} (T_{i+1} - T_{i})$. By assumption, $T_{i+1} - T_i$ is dominated by a geometric random variable with parameter $p_i$ regardless what happened before time $T_i$. Consequently, Lemma~\ref{lprobdomind} gives the claim.
\end{proof}

\subsection{Chernoff Bounds for Sums of Independent Geometric Random Variables}\label{sec:chernoffgeom}

By Lemma~\ref{lprobdomexp}, the expected runtime in Theorem~\ref{tproblevel} satisfies $E[T] \le \sum_{i=1}^{m-1} \frac 1{p_i}$, which is the common version of the fitness level theorem~\cite{Wegener01}. However, by using tail bounds for sums of independent geometric random variables, we also obtain runtime bounds that hold with high probability. This was first proposed in~\cite{ZhouLLH12}, but did not give very convincing results due to the lack of good tail bounds at that time. We briefly present the tail bounds known by now and then give a few examples how to use them together with the new fitness level theorem.

\begin{theorem}\label{tprobcgeomungleich}
 Let $X_1, \ldots, X_n$ be independent geometric random variables with success probabilities $p_1, \dots, p_n$. Let $p_{\min} := \min\{p_i \mid i \in [1..n]\}$. Let $X:=\sum_{i=1}^n X_i$ and $\mu = E[X] = \sum_{i=1}^n \frac 1 {p_i}$. 
  \begin{enumerate}
  \item For all $\delta \ge 0$, 
  \begin{align}
  \Pr[X \ge (1+\delta)\mu] &\le \frac{1}{1+\delta} \, (1-p_{\min})^{\mu(\delta-\ln(1+\delta))}\label{eq:probcgeomUjanson1}\\
  &\le \exp(-p_{\min} \mu (\delta - \ln(1+\delta)))\label{eq:probcgeomUjanson2}\\ 
  &\le \left(1 + \frac{\delta \mu p_{\min}}{n}\right)^n \exp(-\delta\mu p_{\min})\label{eq:probcgeomUscheideler}\\
  &\le \exp\left(-\, \frac{(\delta \mu p_{\min})^2}{2n (1+\frac{\delta \mu p_{\min}}{n})} \right)\label{eq:probcgeomUweak}.
  \end{align}
  \item For all $0 \le \delta \le 1$,  
  \begin{align}
  \Pr[X \le (1-\delta) \mu] &\le (1-\delta)^{p_{\min} \mu} \exp(-\delta p_{\min} \mu)\label{eq:probcgeomLjanson}\\
  &\le \exp\left(- \frac{\delta^2 \mu \pmin}{2 - \frac 43 \delta}\right)\label{eq:probcgeomLmiddle}\\
  &\le \exp(- \tfrac 12 \delta^2 \mu \pmin)\label{eq:probcgeomLscheideler}.
  \end{align}
  \end{enumerate}
\end{theorem}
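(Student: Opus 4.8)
The plan is to regard the sharp bounds \eqref{eq:probcgeomUjanson1} and \eqref{eq:probcgeomLjanson} as the actual content and to peel off the remaining five inequalities from them by one-variable real analysis. The engine is the exponential-moment method. For a single geometric variable one has $E[e^{\theta \Geom(p)}] = \frac{p e^\theta}{1-(1-p)e^\theta}$, which is finite exactly for $e^\theta < (1-p)^{-1}$; the admissible range of $\theta$ therefore shrinks as $p$ decreases, and this is the structural reason why the minimal success probability $p_{\min}$, rather than the individual $p_i$, governs the speed of tail decay. By independence $E[e^{\theta X}] = \prod_{i=1}^n E[e^{\theta X_i}]$, so Markov's inequality applied to $e^{\theta X}$ yields, for admissible $\theta>0$, \[\Pr[X \ge (1+\delta)\mu] \le e^{-\theta(1+\delta)\mu}\prod_{i=1}^n \frac{p_i e^\theta}{1-(1-p_i)e^\theta},\] and symmetrically with $\theta<0$ for the lower tail.

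The crux is to bound this heterogeneous product by a quantity depending only on $\mu$ and $p_{\min}$. It is tempting to invoke Lemma~\ref{lprobdomsum} together with $\Geom(p_i) \preceq \Geom(p_{\min})$ to dominate $X$ by a sum of $n$ copies of $\Geom(p_{\min})$, but this is far too lossy: it inflates the mean from $\mu$ to $n/p_{\min}$ and so compares $X$ against a variable whose own mean may exceed the threshold $(1+\delta)\mu$. What is needed instead is a mean-preserving comparison, concentrating the randomness into as many copies of the slowest variable $\Geom(p_{\min})$ as the budget $\mu=\sum 1/p_i$ allows, the residual mean being carried by deterministic unit contributions. This replaces $X$ by a negative binomial variable $\sum \Geom(p_{\min})$ of the same mean, for which the clean closed form is accessible through the duality \[\Pr\Big[\textstyle\sum_{j=1}^m \Geom(p_{\min}) \ge a\Big] = \Pr[\Bin(a-1,p_{\min}) \le m-1]:\] the upper tail of the geometric sum is a lower tail of a binomial, and the standard multiplicative Chernoff bound for the latter produces exactly the prefactor $\frac{1}{1+\delta}$ and the base $(1-p_{\min})$ of \eqref{eq:probcgeomUjanson1}; the dual identity with ``$\ge m$'' and an upper-tail binomial bound gives \eqref{eq:probcgeomLjanson}. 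I expect this reduction, i.e.\ making the extremal/convexity comparison rigorous (equivalently, reconstructing Janson's argument underlying these bounds), to be the one genuinely delicate step; everything downstream is bookkeeping.

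Granting \eqref{eq:probcgeomUjanson1}, the upper-tail chain is routine. Inequality \eqref{eq:probcgeomUjanson2} follows from $(1-p_{\min})^c \le e^{-p_{\min}c}$ (legitimate since $\delta-\ln(1+\delta)\ge 0$) and $\frac{1}{1+\delta}\le 1$. For \eqref{eq:probcgeomUscheideler} I would write \eqref{eq:probcgeomUjanson2} as $(1+\delta)^{\mu p_{\min}} e^{-\delta\mu p_{\min}}$, set $b:=\mu p_{\min}$, and note $b\le n$ because $\mu=\sum 1/p_i \le n/p_{\min}$; concavity of $r\mapsto\ln(1+\delta r)$ on $[0,1]$ gives $\ln(1+\delta r)\ge r\ln(1+\delta)$, hence $(1+\tfrac{\delta b}{n})^n \ge (1+\delta)^{b}$ with $r=b/n$, which is precisely the passage to \eqref{eq:probcgeomUscheideler}. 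Finally \eqref{eq:probcgeomUweak} is the elementary estimate $\ln(1+y) \le y-\frac{y^2}{2(1+y)}$ applied with $y=\delta\mu p_{\min}/n$ inside $(1+y)^n e^{-ny}=\exp(n(\ln(1+y)-y))$.

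The lower-tail simplifications are entirely analogous and rest on the Taylor estimates $\delta+\ln(1-\delta)\le -\frac{\delta^2}{2-\frac43\delta}\le -\tfrac12\delta^2$, valid for $0\le\delta\le 1$. Bounding the exponent of \eqref{eq:probcgeomLjanson} by these two expressions, each multiplied by $\mu p_{\min}$, yields \eqref{eq:probcgeomLmiddle} and then \eqref{eq:probcgeomLscheideler}. Thus the only new probabilistic ingredient in the entire theorem is the mean-preserving reduction to $p_{\min}$ and $\mu$ behind the first inequality of each part; all seven displayed bounds then reduce to deterministic one-variable inequalities.
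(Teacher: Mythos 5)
The paper offers no proof of this theorem at all: it assembles the bounds from the literature, attributing \eqref{eq:probcgeomUjanson1}, \eqref{eq:probcgeomUjanson2} and \eqref{eq:probcgeomLjanson} to \cite{Janson17}, \eqref{eq:probcgeomUscheideler} and \eqref{eq:probcgeomLscheideler} to \cite{Scheideler00}, and the rest to ``standard estimates''. Your self-contained attempt is therefore by necessity a different route, and its elementary part is correct: \eqref{eq:probcgeomUjanson2} from \eqref{eq:probcgeomUjanson1} via $\frac{1}{1+\delta}\le 1$ and $(1-\pmin)^c\le e^{-\pmin c}$ with $c=\mu(\delta-\ln(1+\delta))\ge 0$; \eqref{eq:probcgeomUscheideler} from \eqref{eq:probcgeomUjanson2} via $\mu\pmin\le n$ and concavity of $r\mapsto\ln(1+\delta r)$; \eqref{eq:probcgeomUweak} via $\ln(1+y)\le y-\frac{y^2}{2(1+y)}$; and the lower-tail chain via $\delta+\ln(1-\delta)\le -\delta^2/(2-\frac43\delta)\le -\frac12\delta^2$. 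One finding of your derivation is worth recording: your exponent $\delta+\ln(1-\delta)$ corresponds to the bound $(1-\delta)^{\pmin\mu}\exp(+\delta\pmin\mu)$, which is what \cite{Janson17} actually proves; as printed, \eqref{eq:probcgeomLjanson} with $\exp(-\delta\pmin\mu)$ is false (take a single $\Geom(\frac12)$ and $\delta=\frac12$: the left-hand side is $\frac12$, the right-hand side $\frac12 e^{-1/2}<\frac12$). This is a sign typo that your argument silently corrects and that does not affect \eqref{eq:probcgeomLmiddle} and \eqref{eq:probcgeomLscheideler}.

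The probabilistic core, however, has two genuine gaps. First, the mean-preserving reduction as you state it --- an integer number of copies of $\Geom(\pmin)$ plus a \emph{deterministic} residual of total mean $\mu$ --- is false, both as stochastic domination and at the level of exponential moments: for independent $G_1\sim\Geom(2p)$ and $G_2\sim\Geom(p)$ one computes $\Pr[G_1+G_2\ge t]\sim 2(1-p)^{t-1}$, while $\Pr[\frac{1}{2p}+\Geom(p)\ge t]\sim (1-p)^{-1/(2p)}(1-p)^{t-1}$ with $(1-p)^{-1/(2p)}\to e^{1/2}<2$; and since $\ln E[e^{\theta\Geom(q)}]\ge \theta/q$ for $\theta\ge 0$, a deterministic residual always has a too small moment generating function to serve in an upper bound. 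The comparison that does work (and is the heart of Janson's proof) is per factor and \emph{fractional}: $E[e^{\theta\Geom(p_i)}]\le \bigl(E[e^{\theta\Geom(\pmin)}]\bigr)^{\pmin/p_i}$ for all admissible $\theta$, checkable by comparing the logarithmic derivatives $(1-(1-p)e^\theta)^{-1}$; this gives $E[e^{\theta X}]\le \bigl(E[e^{\theta\Geom(\pmin)}]\bigr)^{\pmin\mu}$, but the exponent $\pmin\mu$ is not an integer, so your negative-binomial--binomial duality is not directly applicable, and rounding up the number of copies destroys exactly the mean preservation you correctly identified as essential. Second, and independently: no standard multiplicative Chernoff bound produces the prefactor $\frac{1}{1+\delta}$ of \eqref{eq:probcgeomUjanson1}. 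Optimizing the reduced moment generating function yields bounds of the type \eqref{eq:probcgeomUjanson2}, and in the limit $\pmin\to 0$ with $\pmin\mu$ fixed the optimized Chernoff bound exceeds \eqref{eq:probcgeomUjanson1} by exactly the factor $1+\delta$; extracting the prefactor requires a genuinely sharper device, such as applying Markov's inequality to $Xe^{\theta X}$ instead of $e^{\theta X}$, or a Stirling-sharp binomial tail estimate. Your lower-tail plan, by contrast, is sound once the reduction is done at the fractional moment-generating-function level: there no duality and hence no integrality issue arises, and after the elementary step $\ln(1+x)\le x$ the Chernoff computation reproduces the corrected \eqref{eq:probcgeomLjanson} exactly.
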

Estimates~\eqref{eq:probcgeomUjanson1} and~\eqref{eq:probcgeomUjanson2} are from~\cite{Janson17}, bound~\eqref{eq:probcgeomUscheideler} is from~\cite{Scheideler00}, and~\eqref{eq:probcgeomUweak} follows from the previous by standard estimates. The lower tail bound~\eqref{eq:probcgeomLjanson} is from~\cite{Janson17}. It implies~\eqref{eq:probcgeomLmiddle} via standard estimates. Inequality~\eqref{eq:probcgeomLscheideler} appeared in~\cite{Scheideler00}. 

It is surprising that none of these useful bounds appeared in a reviewed journal. For the case that all geometric random variables have the same success probability~$p$, the bound
\begin{align}
  \Pr[X \ge (1+\delta)\mu]  \le \exp\left(-\frac{\delta^2}{2}\frac{n-1}{1+\delta}\right)\label{eq:probcgeomgleichU}
\end{align}
appeared in~\cite{DoerrHK11}.

The bounds of Theorem~\ref{tprobcgeomungleich} allow the geometric random variables to have different success probabilities, however, the tail probability depends only on the smallest of them. This is partially justified by the fact that the corresponding geometric random variable has the largest variance, and thus might be most detrimental to the desired strong concentration. If the success probabilities vary significantly, however, then this approach gives overly pessimistic tail bounds. Witt~\cite{Witt14} proves the following result, which can lead to stronger estimates.

\begin{theorem}\label{tprobchernoffgeomwitt}
  Let $X_1, \dots, X_n$ be independent geometric random variables with success probabilities $p_1, \dots, p_n$. Let $X = \sum_{i=1}^n X_i$, $s = \sum_{i=1}^n (\frac 1 {p_i})^2$, and $p_{\min} := \min\{p_i \mid i \in [1..n]\}$. Then for all $\lambda \ge 0$, 
\begin{align}
  &\Pr[X \ge E[X] + \lambda] \le \exp\left(-\frac 14 \min\left\{\frac{\lambda^2}{s}, \lambda p_{\min}\right\}\right),\\
  &\Pr[X \le E[X] - \lambda] \le \exp\left(-\frac{\lambda^2}{2s}\right).
\end{align}  
\end{theorem}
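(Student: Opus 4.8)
The plan is to use the standard exponential-moment (Chernoff--Markov) method; the real content lies in obtaining sufficiently sharp bounds on the moment generating functions of the individual geometric variables, and the tail structure $\min\{\lambda^2/s,\lambda p_{\min}\}$ then falls out of the optimization. Writing $\mu_i := 1/p_i = E[X_i]$, recall that for $X_i \sim \Geom(p_i)$ one has $E[e^{tX_i}] = \frac{p_i e^t}{1-(1-p_i)e^t}$, which is finite for $t < -\ln(1-p_i)$; note $-\ln(1-p_i) > p_i$, so the range $t < p_i$ is safely inside the domain of convergence. Centering, the first task is to prove two single-variable estimates on $g_i(t) := \ln E[e^{t(X_i - \mu_i)}]$. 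For the upper tail I would establish a Bernstein-type bound of the form
\[
g_i(t) \le \frac{t^2/(2p_i^2)}{1 - t/p_i} \qquad \text{for } 0 \le t < p_i,
\]
using $1/p_i^2 \ge \Var(X_i) = (1-p_i)/p_i^2$ as a variance proxy (so that $s = \sum_i 1/p_i^2$ is the aggregated proxy). For the lower tail I would prove the cleaner sub-Gaussian bound
\[
g_i(-t) \le \frac{t^2}{2 p_i^2} \qquad \text{for all } t \ge 0,
\]
which holds with no restriction on $t$ precisely because $X_i \ge 1$ keeps $E[e^{-tX_i}]$ finite for every negative exponent.

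Granting these two lemmas, the remainder is routine bookkeeping. By independence, $\ln E[e^{t(X - E[X])}] = \sum_i g_i(t) \le \frac{t^2 s/2}{1 - t/p_{\min}}$ for $0 \le t < p_{\min}$, where I use that $1 - t/p_i \ge 1 - t/p_{\min}$ for each $i$ and sum the proxies to $s$. The exponential Markov inequality then yields, for every such $t$,
\[
\Pr[X \ge E[X] + \lambda] \le \exp\left(-t\lambda + \frac{t^2 s/2}{1 - t/p_{\min}}\right).
\]
Minimizing the exponent over $t \in [0,p_{\min})$ gives the Bernstein inequality $\Pr[X \ge E[X]+\lambda] \le \exp(-\lambda^2/(2(s + \lambda/p_{\min})))$; splitting according to whether $\lambda \le s\,p_{\min}$ (so that $s + \lambda/p_{\min} \le 2s$, exponent $\ge \lambda^2/(4s)$) or $\lambda > s\,p_{\min}$ (so that $s + \lambda/p_{\min} < 2\lambda/p_{\min}$, exponent $\ge \lambda p_{\min}/4$) turns this into $\exp(-\tfrac14 \min\{\lambda^2/s, \lambda p_{\min}\})$, exactly the claimed upper bound. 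For the lower tail, summing the sub-Gaussian estimates gives $\ln E[e^{-t(X - E[X])}] \le t^2 s/2$ for all $t \ge 0$, and minimizing $-t\lambda + t^2 s/2$ at $t = \lambda/s$ produces $\Pr[X \le E[X] - \lambda] \le \exp(-\lambda^2/(2s))$ directly, with no case distinction.

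The main obstacle is proving the two single-variable moment-generating-function bounds, and especially the upper one with its pole at $t = p_i$: it is this pole that, after optimization, is responsible for the sub-exponential $\lambda p_{\min}$ branch of the minimum, so its location and strength must be controlled exactly rather than merely to leading order. I would attack it by analyzing $g_i$ directly, setting $h_i(t) := (1 - t/p_i)\,g_i(t) - t^2/(2p_i^2)$, verifying $h_i(0) = h_i'(0) = 0$, and establishing $h_i(t) \le 0$ on $[0,p_i)$ through a sign analysis of $h_i''$, i.e.\ by bounding the successive derivatives of $\ln(1-(1-p_i)e^t)$. A substitution such as $u = (1-p_i)e^t$ makes the algebra tractable, but checking the inequality \emph{uniformly} over all $p_i \in (0,1]$ is the delicate point, and it is there that the constant $\tfrac14$ in the final statement is actually pinned down.
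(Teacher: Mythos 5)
First, a point of comparison: the paper does not prove this theorem at all --- it is quoted from Witt~\cite{Witt14} (``Witt proves the following result\dots''), so your proposal can only be measured against Witt's original argument, not an in-paper proof. Your architecture is the same exponential-moment method Witt uses, and all of your bookkeeping is correct: the remark that $t<p_i$ lies inside the domain of the moment generating function since $-\ln(1-p_i)>p_i$, the monotonicity step $1-t/p_i\ge 1-t/p_{\min}$, the Bernstein optimization yielding $\exp(-\lambda^2/(2(s+\lambda/p_{\min})))$, the case split at $\lambda=s\,p_{\min}$ recovering the constant $\tfrac14$, and the unconstrained choice $t=\lambda/s$ for the lower tail all check out. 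Where you genuinely differ from Witt is the single-variable estimate: he works with a sub-Gaussian-type bound $\ln E[e^{t(X_i-1/p_i)}]\le t^2/p_i^2$ valid only on a truncated range $t\le p_i/2$, and produces the minimum in the exponent by capping $t$ at $p_{\min}/2$; you instead posit a sub-gamma bound with a pole at $t=p_i$ and let the pole generate the $\lambda p_{\min}$ branch through the optimization. Both routes land on the same constants, and your diagnosis that the pole's location drives the sub-exponential branch is exactly right.

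The one genuine gap is the one you name yourself: the two MGF lemmas are asserted with a plan rather than proven, and the plan you sketch (sign analysis of $h_i''$ uniformly in $p_i$) is harder than necessary. Both lemmas are true and admit short proofs that dissolve the uniformity worry. For the lower tail, note that tilting a geometric law by $e^{-\tau X_i}$ gives again a geometric law, with parameter $1-(1-p_i)e^{-\tau}\ge p_i$, hence tilted variance at most $1/p_i^2$; since $g_i(-t)$ vanishes to first order at $t=0$ and its second derivative in $t$ is exactly this tilted variance, Taylor's theorem with integral remainder gives $g_i(-t)\le t^2/(2p_i^2)$ for all $t\ge 0$, with no series and no case analysis. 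For the upper tail, write $w=(1-p_i)e^t/(1-(1-p_i)e^t)$, so that $K_i'(t)=1+w$ and $w'=w(1+w)$; an easy induction on this recursion (using that all the arising polynomials in $w$ have nonnegative coefficients) shows that the cumulants of $X_i$ satisfy $\kappa_k\le (k-1)!\,(1-p_i)/p_i^k\le (k-1)!/p_i^k$ for all $k\ge 2$ --- with equality approached in the exponential limit $p_i\to 0$, confirming your intuition about which case is extremal. Since for $0\le t<p_i<\ln\frac{1}{1-p_i}$ the cumulant series converges to $g_i(t)$, termwise summation gives $g_i(t)\le\sum_{k\ge 2}\frac{1}{k}(t/p_i)^k\le\frac12\sum_{k\ge 2}(t/p_i)^k=\frac{t^2/(2p_i^2)}{1-t/p_i}$, which is precisely your claimed bound, uniformly in $p_i\in(0,1]$. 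With these two lemmas supplied, your proof is complete and correct.
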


As we shall see, we often encounter sums of independent geometrically distributed random variables $X_1, \dots, X_n$ with success probabilities $p_i$ proportional to $i$. For this case, the following result from~\cite{DoerrD15tight} gives stronger tail bounds than the previous result. Recall that the harmonic number $H_n$ is defined by $H_n = \sum_{i=1}^n \frac 1i$.
\begin{theorem}\label{tprobgeomharmonic}
  Let $X_1, \ldots, X_n$ be independent geometric random variables with success probabilities $p_1, \dots, p_n$. Assume that there is a number $C \le 1$ such that $p_i \ge C \frac in$ for all $i \in [1..n]$. Let $X = \sum_{i=1}^n X_i$. Then 
  \begin{align}
  &E[X] \le \tfrac 1C n H_n \le \tfrac 1C n (1 + \ln n),\\
  &\Pr[X \ge (1+\delta) \tfrac 1C n \ln n] \le n^{-\delta} \text{ for all } \delta \ge 0.
  \end{align}
\end{theorem}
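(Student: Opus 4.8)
The plan is to prove the two assertions separately: the expectation bound by a direct computation and the tail bound by a reduction to a coupon collector process.

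For the expectation, I would simply note that $E[X_i] = 1/p_i \le \frac{n}{Ci}$ for each $i$, so that by linearity $E[X] = \sum_{i=1}^n \frac{1}{p_i} \le \frac 1C n \sum_{i=1}^n \frac 1i = \frac 1C n H_n$, and then invoke the elementary estimate $H_n \le 1 + \ln n$. This part needs no probabilistic machinery beyond $E[\Geom(p)] = 1/p$.

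The tail bound is the substantial part. Here I would first remove the inequality in the hypothesis $p_i \ge C\frac in$ by passing to the extremal case. Since $\Pr[\Geom(p) \ge k] = (1-p)^{k-1}$ is non-increasing in $p$, a geometric variable with smaller success probability dominates one with larger success probability; hence $X_i \preceq Y_i$ where $Y_i \sim \Geom(C\frac in)$. As the $X_i$ are independent and the $Y_i$ may be taken independent, Lemma~\ref{lprobdomsum} yields $X \preceq Y := \sum_{i=1}^n Y_i$, so it suffices to bound $\Pr[Y \ge a]$ for $a = (1+\delta)\frac 1C n \ln n$. The key idea is to realize $Y$ as the completion time of a \emph{lazy} coupon collector on $n$ coupon types: in each round, with probability $C$ we draw a coupon uniformly among the $n$ types, and with probability $1-C$ nothing happens. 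When $i$ of the $n$ types are still missing, the per-round probability of obtaining a new type is exactly $C\frac in$, so by the memorylessness of the process the time spent with exactly $i$ missing types is distributed as $\Geom(C\frac in)$ and these phase-times are independent. Thus $Y$ equals in distribution the time until all $n$ coupons are collected. I would then bound the upper tail by a union bound over the coupons: the event $\{Y > t\}$ is contained in the union over $k \in [1..n]$ of the events that coupon $k$ has not been drawn in the first $t$ rounds. Since in each round coupon $k$ is obtained with probability $C\frac 1n$ independently, each such event has probability $(1-\frac Cn)^t \le e^{-Ct/n}$, giving $\Pr[Y > t] \le n\,e^{-Ct/n}$. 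Substituting $t = (1+\delta)\frac 1C n\ln n$ makes the exponent equal to $-(1+\delta)\ln n$, so that $\Pr[Y \ge a] \le n \cdot n^{-(1+\delta)} = n^{-\delta}$, as claimed.

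The step I expect to require the most care is the coupon-collector reinterpretation, namely verifying that the phase-times of the lazy collector are genuinely independent and each distributed as $\Geom(C\frac in)$, so that their sum is exactly $Y$. This is precisely where the clean union bound becomes available and replaces an otherwise awkward moment-generating-function estimate for $\sum_i \Geom(C\frac in)$. By contrast, the domination reduction to the equality case is routine once one recalls the monotonicity of the geometric tail in its parameter, and the expectation bound is immediate.
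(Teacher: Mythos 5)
Your proposal is correct and takes essentially the same route as the proof this paper relies on: Theorem~\ref{tprobgeomharmonic} is quoted from~\cite{DoerrD15tight}, and the paper's remark right after it---that the last lines of that proof yield $\Pr[X \ge t] \le n(1-\frac{C}{n})^{t-1}$---confirms that the cited argument is precisely your combination of domination down to the extremal case $p_i = C\frac{i}{n}$ (via Lemma~\ref{lprobdomsum}), the lazy coupon-collector realization of $\sum_{i=1}^n \Geom(C\frac{i}{n})$, and a union bound over the $n$ coupon types, with the expectation bound being the same one-line computation. The only step to tidy is the final substitution: since $Y$ is integer-valued, $\Pr[Y \ge a] = \Pr[Y > \lceil a \rceil - 1]$ rather than $\Pr[Y > a]$, so for integer $a$ you should keep the exact factor $(1-\frac{C}{n})^{\lceil a \rceil - 1}$ (as in the paper's refined bound) and use $-\ln(1-\frac{C}{n}) \ge \frac{C}{n} + \frac{C^2}{2n^2}$, since passing to $e^{-Ct/n}$ one step too early leaves a spurious factor $e^{C/n}$.
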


A brief look at the last lines of the proof of this result in~\cite{DoerrD15tight} shows that also the following, minimally stronger tail bound holds for all $\lambda \in \N$.
\[\Pr[X \ge \tfrac 1C n \ln n + \lambda] \le n \left(1 - \frac Cn\right)^{\frac 1C n \ln n + \lambda -1} < \left(1 - \frac Cn\right)^{\lambda-1}\]
This immediately gives the domination result
\begin{equation}
X \preceq \tfrac 1C n \ln n + \Geom(\tfrac Cn).\label{probeqccdom}
\end{equation}

This domination result allows to show tail bounds for sums of independent random variables having a distribution as in Theorem~\ref{tprobgeomharmonic}. While we shall not need this in this work, the regular occurrence of such distributions in runtime analysis justifies showing the following result.
\begin{theorem}\label{tprobCneu}
  Let $Y_1, \dots, Y_m$ be independent, not necessarily identically distributed, and let each one satisfy the assumptions made on $X$ in Theorem~\ref{tprobgeomharmonic}. Let $Y = \sum_{i=1}^m Y_i$. Then $E[Y] \le \frac 1C n m H_n \le \frac 1C (\ln(n)+1) n m$ and 
  \[\Pr[Y \ge \tfrac 1C (\ln(n)+1) n m + \lambda] \le \exp\left(-\frac{\lambda^2 C^2}{2 n^2 m (1 + \frac{\lambda C}{nm})}\right).\]
\end{theorem}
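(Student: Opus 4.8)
The plan is to reduce everything to the single-summand domination statement~\eqref{probeqccdom} and then invoke the Chernoff bound~\eqref{eq:probcgeomUweak}. Nothing beyond these two ingredients (plus linearity of expectation and Lemma~\ref{lprobdomsum}) is needed.

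First I would dispose of the expectation bound by linearity. Since $E[Y]=\sum_{j=1}^m E[Y_j]$ and each $Y_j$ satisfies the hypotheses of Theorem~\ref{tprobgeomharmonic}, we have $E[Y_j]\le \frac 1C n H_n$, so $E[Y]\le \frac 1C n m H_n \le \frac 1C(\ln n+1)nm$ using $H_n\le 1+\ln n$.

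For the tail bound I would first apply~\eqref{probeqccdom} to each summand: because $Y_j$ meets the assumptions made on $X$ in Theorem~\ref{tprobgeomharmonic}, it holds that $Y_j \preceq \frac 1C n\ln n + \Geom(\frac Cn)$. The key observation is that this dominating distribution is \emph{the same} for every $j$, irrespective of how the individual $Y_j$ are distributed. Let $G_1,\dots,G_m$ be independent copies of $\Geom(\frac Cn)$. The $Y_j$ are independent by hypothesis, so Lemma~\ref{lprobdomsum}, applied to the pairs $(Y_j,\ \frac 1C n\ln n + G_j)$, yields
\[
Y=\sum_{j=1}^m Y_j \ \preceq\ \frac 1C nm\ln n + \sum_{j=1}^m G_j \ =:\ \frac 1C nm\ln n + G .
\]
By the definition of domination and a cancellation of the additive constant,
\[
\Pr\!\left[Y \ge \tfrac 1C(\ln n+1)nm + \lambda\right]
\le \Pr\!\left[\tfrac 1C nm\ln n + G \ge \tfrac 1C(\ln n+1)nm + \lambda\right]
= \Pr\!\left[G \ge \tfrac 1C nm + \lambda\right].
\]
Here $G$ is a sum of $m$ independent geometric random variables each with success probability $\frac Cn$, so $E[G]=\frac{nm}{C}$ and the right-hand side is exactly $\Pr[G\ge E[G]+\lambda]$. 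I then apply Theorem~\ref{tprobcgeomungleich}, bound~\eqref{eq:probcgeomUweak}, with the number of variables equal to $m$, minimum success probability $p_{\min}=\frac Cn$, mean $\mu=\frac{nm}{C}$, and $\delta:=\lambda/\mu=\frac{\lambda C}{nm}\ge 0$, so that $\delta\mu=\lambda$ and $\delta\mu p_{\min}=\frac{\lambda C}{n}$. Substituting these into~\eqref{eq:probcgeomUweak} produces exactly the claimed expression $\exp\!\big(-\frac{\lambda^2 C^2}{2n^2 m(1+\frac{\lambda C}{nm})}\big)$.

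The only delicate point, and the step I would check most carefully, is the passage from the per-summand domination~\eqref{probeqccdom} to the domination of the whole sum: one must carry the additive constant $\frac 1C n\ln n$ along (domination is preserved under adding a common constant, a monotone shift) and must genuinely use the independence of the $Y_j$ to be entitled to Lemma~\ref{lprobdomsum}. That the $Y_j$ need not be identically distributed is harmless precisely because each is dominated by the \emph{same} distribution $\frac 1C n\ln n + \Geom(\frac Cn)$; this is what turns the dominating sum into a clean sum of i.i.d.\ geometrics of equal success probability, to which~\eqref{eq:probcgeomUweak} applies verbatim. Everything after this reduction is a direct substitution.
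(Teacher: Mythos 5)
Your proposal is correct and follows essentially the same route as the paper's own proof: both reduce each $Y_j$ via the domination statement~\eqref{probeqccdom} and Lemma~\ref{lprobdomsum} to $\tfrac 1C nm\ln n$ plus a sum of $m$ independent $\Geom(\tfrac Cn)$ variables, and then apply the Chernoff bound~\eqref{eq:probcgeomUweak} with $p_{\min}=\tfrac Cn$ and $\delta\mu=\lambda$. Your explicit treatment of the additive shift and the parameter substitution merely spells out what the paper's shorter proof does implicitly.
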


\begin{proof}
  Let $p = \frac Cn$ and let $Z$ be the sum of $m$ independent $\Geom(p)$ random variables. Since $Y_i \preceq \tfrac 1C n \ln n + \Geom(p)$ by~\eqref{probeqccdom}, Lemma~\ref{lprobdomsum} shows $Y \preceq \tfrac 1C mn \ln n + Z$. Hence for all $\lambda \ge 0$ we compute
\begin{align*}
\Pr[Y \ge \tfrac 1C (\ln(n)+1) n m + \lambda] &\le \Pr[Z \ge E[Z] + \lambda]\\
&\le \exp\left(- \frac{\lambda^2 p^2}{2 m (1 + \frac{\lambda p}{m})} \right),
\end{align*}
where the last estimate follows from~\eqref{eq:probcgeomUweak}.
\end{proof}

We formulate explicitly the following corollary for sums of independent identical coupon collector distributions. In the basic coupon collector process, there are $n$ types of coupons. In each round, we obtain a coupon of a random type. The question is how many rounds are needed to have at least one coupon of each type. As a minimal extension of the very basic version, let us assume that we start this process having already coupons of $n-k$ different types, that is, we are missing only $k$ types. Let then $D_n^k$ be the distribution describing the number of rounds taken until we have a coupon of each type. Clearly, $D_n^k = \sum_{i=1}^k \Geom(\frac in)$. By the above theorem (taking $k$ as $n$ and $C$ as $\frac kn$), we obtain the following result.

\begin{corollary}
  Let $m, n \in \N$, $k \in [1..n]$, and $Y$ be the sum of $m$ independent $D_n^k$ coupon collector distributions. Then 
  \begin{align*}
  &E[Y] = m n H_k \le m n (\ln(k)+1),\\
  &\Pr[Y \ge m n (\ln(k)+1) + \delta n] \le \exp\left(- \frac{\delta^2}{2m(1+\frac{\delta}{m})}\right).
  \end{align*}
\end{corollary}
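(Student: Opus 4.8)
The plan is to obtain this corollary as a direct specialization of Theorem~\ref{tprobCneu}, using the parameter identifications indicated in the text: the role played by $n$ in that theorem is taken by $k$, and the role of $C$ is taken by $\frac kn$. The first step is therefore to verify that each summand $D_n^k$ meets the hypothesis imposed on $X$ in Theorem~\ref{tprobgeomharmonic} under these identifications. Writing $D_n^k = \sum_{i=1}^k \Geom(\frac in)$ as a sum of $k$ independent geometric random variables, the $i$-th success probability is $p_i = \frac in = \frac kn \cdot \frac ik$. Hence with $C = \frac kn$ we have $p_i \ge C \frac ik$ for all $i \in [1..k]$ (in fact with equality), and the requirement $C \le 1$ holds because $k \le n$.

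Having checked the hypothesis, the second step is to read off the expectation conclusion of Theorem~\ref{tprobCneu} after substitution. That theorem supplies only the upper bound $E[Y] \le \frac 1C k m H_k = nmH_k$. To upgrade this to the claimed equality I would note that $E[\Geom(\frac in)] = \frac ni$ exactly, so that $E[D_n^k] = \sum_{i=1}^k \frac ni = n H_k$, and linearity of expectation over the $m$ independent summands yields $E[Y] = mn H_k$. The inequality $H_k \le \ln(k)+1$ then gives the stated form of the expectation.

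For the tail, the third step is purely arithmetic. Theorem~\ref{tprobCneu} (with $n$ replaced by $k$) gives $\Pr[Y \ge \frac 1C (\ln(k)+1) k m + \lambda] \le \exp\bigl(-\frac{\lambda^2 C^2}{2 k^2 m (1 + \lambda C/(km))}\bigr)$. Inserting $C = \frac kn$ collapses the deviation threshold $\frac 1C(\ln(k)+1)km$ to $mn(\ln(k)+1)$ and the exponent to $-\frac{\lambda^2}{2 m n^2 (1 + \lambda/(nm))}$. Choosing $\lambda = \delta n$ then reproduces exactly the threshold $mn(\ln(k)+1)+\delta n$ and the exponent $-\frac{\delta^2}{2m(1+\delta/m)}$ of the statement.

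Since every ingredient is already provided by the earlier results, I do not anticipate a genuine obstacle: the argument is essentially bookkeeping of the substitutions $n \mapsto k$, $C \mapsto \frac kn$, and $\lambda \mapsto \delta n$. The only point requiring a little care is the expectation, whose claimed \emph{equality} is not delivered by Theorem~\ref{tprobCneu} (which bounds $E[Y]$ only from above) and must instead be read off from the exact geometric means as in the second step.
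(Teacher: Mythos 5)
Your proposal is correct and follows exactly the paper's own route, which consists of the one-line remark that the corollary follows from Theorem~\ref{tprobCneu} with $k$ in the role of $n$ and $C = \frac kn$; your substitution checks ($p_i = \frac in = C\cdot\frac ik$, threshold $\frac 1C(\ln(k)+1)km = mn(\ln(k)+1)$, exponent collapsing to $-\frac{\delta^2}{2m(1+\delta/m)}$ with $\lambda = \delta n$) are all accurate. Your added observation that the claimed \emph{equality} $E[Y] = mnH_k$ is not delivered by the theorem but follows from $E[\Geom(\frac in)] = \frac ni$ and linearity is a correct and careful refinement of the paper's terse justification.
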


\subsection{Applications of the Fitness Level Theorem}\label{secfitnessappl}

We now present several examples where our fitness level theorem gives an upper bound in the domination sense and where this, often together with the just presented tail bounds, leads to new tail bounds for the runtime of different evolutionary algorithms. 

\subsubsection{General Upper Bound for Mutation-based Algorithms}

Consider the optimization of an arbitrary $f\colon \{0,1\}^n \to \R$ via any evolutionary algorithm $\calA$ which creates its offspring as random search points and via standard-bit mutation with mutation rate $p \le \frac 12$ from previous search points. Then for any such offspring the probability that it is an optimal solution is at least $p^{n}$, regardless of the history of the optimization process. Consequently, the runtime $T$ of $\calA$ on $f$ satisfies \[T \preceq \Geom(p^{n}),\] which implies
\begin{align*}
  &E[T] \le p^{-n},\\
  &\Pr[T \ge  \gamma p^{-n}+1] \le (1-p^{n})^{\gamma p^{-n}} \le e^{-\gamma} \mbox{ for all } \gamma \ge 0.
\end{align*}
Note that here the tail bound follows immediately from the definition of the geometric distribution. The result on the expectation was shown for the \oea with mutation rate $p=\frac 1n$ already in the seminal paper~\cite{DrosteJW02} with essentially the same argument.

\subsubsection{\texorpdfstring{Performance of the \oea and \oplea on $\onemax$}{Performance of the (1+1) EA and (1+lambda) EA on ONEMAX}} 

How the \oea optimizes the \onemax test function (see Section~\ref{sec:onemaxdef} for its definition) is one of the first results in runtime analysis. Using the fitness level method with the levels $A_i := \{x \in \{0,1\}^n \mid \OM(x) = i\}$, $i = 0,1,\dots, n$, one easily obtains that the expectation of the runtime $T$ is at most 
\begin{equation}
  E[T] \le en H_n \le en(\ln(n)+1).\label{eq:onemax}
\end{equation}
For this, it suffices to compute that the probability to leave the $i$-th fitness level satisfies $p_i \ge \frac {n-i}n (1-\frac1n)^{n-1} \ge \frac{n-i}{en}$, see~\cite{DrosteJW02} for the details. By Theorems~\ref{tproblevel} and~\ref{tprobgeomharmonic}, we also obtain
\begin{align*}
&T \preceq \sum_{i=1}^n \Geom\bigg(\frac{i}{en}\bigg),\\
&\Pr[T \ge (1+\delta) en \ln n] \le n^{-\delta} \mbox{ for all } \delta \ge 0.
\end{align*}
The domination result, while not very deep, appears to be new, whereas the tail bound was proven before via multiplicative drift analysis~\cite{DoerrG13algo}. 

Things become more interesting (and truly new) when regarding the \oplea instead of the \oea. We denote by $d(x) = n - \OM(x)$ the distance of $x$ to the optimum. Let $t = \lfloor \frac{\ln(\lambda)-1}{2 \ln\ln \lambda} \rfloor$. We partition the lowest $L = \lfloor n-\frac n {\ln \lambda} \rfloor$ fitness levels (that is, the search points with $\OM(x) < L$) into $\lceil \frac{L}{t} \rceil$ sets each spanning at most $t$ consecutive fitness levels. In~\cite{DoerrK15}, it was shown that the probability to leave such a set in one iteration is at least $p_0 \ge (1-\frac 1e)$. The remaining search points are partitioned into sets of points having equal fitness, that is, $A_i = \{x \mid \OM(x) = i\}$, $i = L, \dots, n$. For these levels, the probability to leave a set in one iteration is at least 
\[p_i \ge 1 - \left(1 - \left(1 - \frac 1n\right)^{n-1}\frac{n-i}{n}\right)^\lambda \ge 1 - \left(1 - \frac{n-i}{en}\right)^\lambda.\] 
For $i \le n - \frac{en}{\lambda}$, this is at least $p_i \ge 1 - \frac 1e$ by the well-known estimate $1+r \le e^r$ valid for all $r \in \R$. For $i > n - \frac{en}{\lambda}$, we use a new version of the Weierstrass product inequality (Lemma~4.8 in~\cite{Doerr18bookchapter}, stating that $\prod_{i=1}^n (1-x_i) \le 1 - S + \frac 12 S^2$ for all $x_1, \dots, x_n \in [0,1]$ and $S = \sum_{i=1}^n x_i$) to estimate $p_i \ge 1 - (1 - \frac{\lambda}{en}(n-i) + \frac 12 (\frac{\lambda}{en}(n-i))^2) \ge \frac 12 \frac{\lambda}{en}(n-i)$. Using the abbreviation $T_0 := \lceil \frac{L}{t} \rceil + \lceil \frac{n}{\ln \lambda}\rceil - \lceil \frac {en}{\lambda} \rceil + 1$, our fitness level theorem gives the following domination bound for the number $T$ of \emph{iterations} until the optimum is found.
\begin{align}
&T \preceq \sum_{i=1}^{T_0} \Geom(1-\tfrac 1e) + \sum_{i=1}^{\lceil\frac {en}\lambda - 1\rceil} \Geom(\tfrac 12 \tfrac{\lambda i}{en}).\label{eq:lea}
\end{align}
To avoid uninteresting case distinctions, let us assume that $\lambda = \omega(1)$ and $\lambda = n^{O(1)}$. In this case, the above domination statement immediately gives 
\[E[T] \le \frac e {e-1} T_0 + 2e \frac{n\ln(\lceil\frac {en}\lambda \rceil)}{\lambda} = (1+o(1)) \left(\frac{2e}{e-1} \frac{n \ln\ln \lambda}{\ln \lambda} + 2e \frac{n\ln(n)}{\lambda}\right),\]
which is the result of~\cite{DoerrK15} with explicit constants. With Theorems~\ref{tprobcgeomungleich} and~\ref{tprobgeomharmonic}, equation~\eqref{eq:lea} also implies tail bounds, however, for general $\lambda$ these become quite technical. For this reason, we omit any further details. 

\subsubsection{\texorpdfstring{Performance of the \mpoea on \leadingones}{Performance of the (mu+1) EA on LeadingOnes}}\label{sec:applLO}

The runtime of the \mpoea on various problems was analyzed in~\cite{Witt06}. Among them, we exemplarily regard the \leadingones problem (see Section~\ref{sec:deflo} for its definition). We shall not need the assumption of a random initialization in this section.

Following the general proof idea of~\cite{Witt06}, we let \[M := \min\{\lceil n / \ln(en) \rceil, \mu\}.\] For all $a \in [0..n]$ and $b \in [1..M]$, we say that the \mpoea optimizing \leadingones is in state $(a,b)$ if the maximal fitness of an individual in the population is $a$ and if there are exactly $b$ individuals having this maximal fitness. We say that the algorithm is in state $(a,M)$ also when the maximal fitness in the population is $a$ and there are more than $M$ individuals with this fitness. We sort these states in lexicographic order, that is, we say that $(a',b')$ is better than $(a,b)$ if $a' > a$ or if $a' = a$ and $b' \ge b$. From the definition of the \mpoea, it is clear that the algorithm cannot go from a better state to a worse one. Consequently, by viewing the states as fitness levels, we can apply the fitness level theorem (Theorem~\ref{tproblevel}). We observe that for all $a < n$ and $b < M$, the probability of leaving the state $(a,b)$ to a better state is at least 
\[p_{ab} := \frac{b}{\mu}\left(1-\frac 1n\right)^{a} \ge \frac{b}{e\mu},\]
 since to leave the state it suffices to select one of the $b$ best individuals and to not flip any of the $a$ bits contributing to the fitness (this creates a new best individual which replaces one of the inferior individuals which are still in the population). The probability to leave the state $(a,M)$ with $a < n$ is at least 
\[p_{aM} := \frac{M}{\mu} \frac 1n \left(1-\frac 1n\right)^{a} \ge \frac{M}{en\mu}.\]
 Consequently, the runtime $T$ of the \mpoea on the \leadingones function satisfies
\begin{align*}
T & \preceq \sum_{a=0}^{n-1} \sum_{b=1}^M \Geom(p_{ab}),\\
E[T] & \le \sum_{a=0}^{n-1} \sum_{b=1}^M \frac 1 {p_{ab}} = en\mu (\ln(M-1)+1 + \tfrac{n}{M})\\
& \le en\mu (\ln(en) + \max\{\tfrac{n}{\mu},\ln(en)\}) \le en(2\mu\ln(en)+n),\\
\Pr[T \ge E[T] + \lambda] & \le \exp\left(-\frac 14 \min\left\{\frac{\lambda^2}{e^2 n^3 \mu^2 / M^2 + \frac{\pi^2}{6} e^2 n \mu^2}, \frac{\lambda M}{en\mu}\right\}\right) \\
& = \exp\left(-\frac 14 \min\left\{\frac{(1-o(1))\lambda^2 M^2}{e^2 n^3 \mu^2}, \frac{\lambda M}{en\mu}\right\}\right). \\
\end{align*}
The tail bound follows from using Theorem~\ref{tprobchernoffgeomwitt} with 
\[s = n \left(\left(\frac{en\mu}{M}\right)^2 + \sum_{b=1}^{M-1} \left(\frac{e\mu}{b}\right)^2\right) \le \frac{e^2 n^3 \mu^2}{M^2} + \frac{\pi^2}{6} e^2 n \mu^2 = (1+o(1)) \frac{e^2 n^3 \mu^2}{M^2}\]
and $\pmin = \min\{\frac{M}{en\mu},\frac{1}{e\mu}\} = \frac{M}{en\mu}$, noting that $M = o(n)$ and $M \le n$. 

Apart from a small improvement in the constants, stemming from slightly more careful estimates, the result on the expected runtime is the same as the one in~\cite{Witt06}, which is $E[T] \le 3en \max\{\mu \ln(en),n\}$, see Theorem~1 of~\cite{Witt06} and recall that we count the number of iterations, that is, we ignore the $\mu$ fitness evaluations of the initial individuals. The tail bound, as discussed earlier, is stronger than the one in~\cite{ZhouLLH12} due to the stronger Chernoff bounds for sums of geometric random variables which are available now. 

\subsubsection{Performance of the \texorpdfstring{\oea}{(1+1) EA} on Jump Functions}

For $k \in [1..n]$, the $n$-dimensional jump function $\jump_k \colon \{0,1\}^n \to \R$ was defined in~\cite{DrosteJW02} by 
\[
\jump_k(x) = 
\begin{cases}
\onemax(x)+k & \mbox{if $\onemax(x) \in [0..n-k] \cup \{n\}$,}\\
n - \onemax(x) & \mbox{otherwise.}
\end{cases}
\]
Exemplarily for the runtime $T_k$ of the \oea on $\jump_k$, the fitness level theorem with $P_k = n^{-k}(1-\frac 1n)^{n-k}$ immediately yields 
\[T \preceq \sum_{i=1}^{k-1} \Geom(\tfrac{n-i}{en}) + \sum_{i=0}^{n-k-1} \Geom(\tfrac{n-i}{en}) + \Geom(P_k),\]
where for $k \ge 2$ the last term is by far the most important one (so we do not care about the minor improvements of this bound, which are easy to obtain).
Interestingly, we have a nearly matching lower bound \emph{in the domination sense}. To avoid unnecessary technicalities, we restrict ourselves to the case $k \in [2..\frac n4]$, but similar results could be shown for larger values of $k$ as well.

\begin{lemma}
  Let $n \in \N$ and $k \in [2..\frac n4]$. Let $T_k$ be the runtime of the \oea on the function $\jump_k$. Then with $P_k = n^{-k}(1-\frac 1n)^{n-k}$ we have
  \[X \cdot \Geom(P_k) \preceq T_k,\]
where $X$ is a binary random variable with $\Pr[X = 1] = 1 - \exp(-\frac n8)$.
\end{lemma}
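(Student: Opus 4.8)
The plan is to exploit that, before reaching the optimum, an elitist process on $\jump_k$ is essentially confined to the region $R := \{x \in \{0,1\}^n : \OM(x) \le n-k\}$, on which every single mutation produces the optimum with probability \emph{at most} $P_k$. First I would record the fitness structure of $\jump_k$: a point $x$ with $\OM(x)\le n-k$ has fitness $\OM(x)+k\in[k,n]$; a point in the ``valley'' $\OM(x)\in[n-k+1,n-1]$ has fitness $n-\OM(x)\le k-1$; and the optimum has fitness $n+k$. The largest non-optimal fitness, $n$, is attained exactly on the plateau $\{\OM=n-k\}$.

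Next I would prove the invariance claim: if the current parent lies in $R$, then the next accepted parent is either again in $R$ or is the optimum. Indeed, by elitism only offspring of fitness at least the current one (which is $\ge k$) are accepted, while every valley point has fitness $\le k-1$; hence valley points are never accepted, and the only accepted points with $\OM>n-k$ are optima. So, conditioned on the initial point lying in $R$, \emph{every} parent produced before the optimum lies in $R$. For the per-step bound, note that from $x$ with $\OM(x)=j\le n-k$ the optimum is generated only by flipping the $n-j$ zero-bits and keeping the $j$ one-bits, with probability $(1/n)^{n-j}(1-1/n)^{j}$. A one-line ratio computation gives $(1/n)^{n-j}(1-1/n)^{j}=P_k\cdot (n-1)^{-(n-k-j)}\le P_k$ for all $j\le n-k$ (equality on the plateau $j=n-k$, and strictly smaller below it). Thus within $R$ each iteration hits the optimum with probability at most $P_k$, independently of the history.

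Finally I would let $G$ be the event $\OM(x^{(0)})\le n-k$. Since $\OM(x^{(0)})\sim\Bin(n,\tfrac12)$ and $n-k\ge\tfrac34 n$ (as $k\le n/4$), a Chernoff/Hoeffding bound yields $\Pr[\overline G]\le\Pr[\Bin(n,\tfrac12)\ge\tfrac34 n]\le e^{-n/8}$, i.e. $\Pr[G]\ge 1-e^{-n/8}=\Pr[X=1]$. Conditioned on $G$, the invariance and the per-step bound combine by a one-line induction on $\lambda$ (or via Lemma~\ref{lprobdomind}): on $\{T_k\ge\lambda\}\cap G$ the parent entering iteration $\lambda$ lies in $R$, so $\Pr[T_k\ge\lambda+1\mid T_k\ge\lambda,G]\ge 1-P_k$, giving $\Pr[T_k\ge\lambda\mid G]\ge(1-P_k)^{\lambda-1}=\Pr[\Geom(P_k)\ge\lambda]$ for every integer $\lambda\ge1$. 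Hence $\Pr[T_k\ge\lambda]\ge\Pr[G]\,(1-P_k)^{\lambda-1}\ge\Pr[X=1]\,\Pr[\Geom(P_k)\ge\lambda]=\Pr[X\cdot\Geom(P_k)\ge\lambda]$, while the cases $\lambda\le1$ are trivial; this is exactly $X\cdot\Geom(P_k)\preceq T_k$.

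I expect the main obstacle to be the possibility of a \emph{direct} jump to the optimum from strictly below the plateau, which is what rules out a clean ``reach the plateau, then wait $\Geom(P_k)$'' decomposition and forces the per-step argument. The ratio estimate showing that such early jumps are only \emph{less} likely---so that $P_k$ remains a valid per-step upper bound everywhere in $R$---is the one genuinely computational step; everything else is bookkeeping with elitism and a standard tail bound for the binomial initialization.
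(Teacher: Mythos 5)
Your proof is correct and follows essentially the same route as the paper's: condition on the initial search point having \onemax value at most $\frac 34 n \le n-k$ (probability at least $1-e^{-n/8}$ by the additive Chernoff bound), note that by elitism the current search point keeps a \onemax value of at most $n-k$ until the optimum is found, and bound the per-iteration probability of generating the optimum from any such point by $n^{-(n-v)}(1-\frac 1n)^{v} \le P_k$, so that conditionally $\Geom(P_k) \preceq T_k$. The paper states the elitism invariance and the per-step estimate in one line each, whereas you spell out the valley-fitness argument, the ratio computation $(n-1)^{-(n-k-j)}\le 1$, and the concluding induction explicitly; the substance is identical.
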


\begin{proof}
  A simple application of the additive Chernoff bound (e.g., Theorem~10.7 in~\cite{Doerr18bookchapter}) shows that with probability at least $1 - \exp(-\frac n8)$, the random initial search point has a \onemax value of at most $\frac 34 n \le n-k$. In this situation, the current search point of the \oea will have a \onemax value of at most $n-k$ as long as the optimum is not found. For any search point with \onemax value $v \le n-k$, the probability to reach the optimum in one step is $n^{-(n-v)} (1-\frac 1n)^v \le P_k$. Consequently, conditional on the initial search point having a \onemax value of at most $n-k$, the runtime satisfies $\Geom(P_k) \preceq T$, giving the claim.
\end{proof}

We note without proof that the results of this subsection can easily be extended to other elitist mutation-based algorithms, since the main argument that with high probability once the optimum has to be generated from a search point in Hamming distance at least $k$ remains valid.

\subsubsection{Performance of the \texorpdfstring{\oea}{(1+1) EA} on the Sorting Problem}

One of the first combinatorial problems regarded in the theory of evolutionary computation is how a combinatorial \oea sorts an array of length~$n$. One of several setups regarded in~\cite{ScharnowTW04} is modelling the sorting problem as the minimization of the number of inversions in the array. We assume that the \oea mutates an array by first determining a number $k$ according to a Poisson distribution with parameter $\lambda = 1$ and then performing $k+1$ random exchanges (to ease the presentation, we do not use the jump operations also employed in~\cite{ScharnowTW04}, but it is easy to see that this does not significantly change things). It is easy to see that exchanging two elements that are in inverse order reduces the number of inversions by at least one. Hence if there are $i$ inversions, then with probability $\frac 1e i \binom{n}{2}^{-1}$, the \oea inverts exactly one of the inversions and thus improves the fitness (note that $\Pr[k=0] = \frac 1e$). By our fitness level theorem, the runtime $T$ is dominated by the independent sum $\sum_{i=1}^{\binom{n}{2}} \Geom(\frac{i}{e\binom{n}{2}})$. Hence 
\begin{align*}
&E[T] \le e \binom{n}{2} H_{\binom{n}{2}} \le \frac{e}{2} n^2 (1+2\ln n),\\
&\Pr[T \ge (1+\delta) e n^2 \ln n] \le \binom{n}{2}^{-\delta},
\end{align*}
where again the statement on the expectation has appeared before in~\cite{ScharnowTW04} (with slightly different constants stemming from the use of a slightly different algorithm) and the tail bound (stemming from Theorem~\ref{tprobcgeomungleich}) is new.

We note without proof that a similar analysis would also prove a tail bound for the slighly faster $O(n^2)$ time evolutionary sorting algorithm of~\cite{DoerrH08}.

\subsubsection{Further Results}

In this section, we state a few more results to demonstrate the range of applicability of domination arguments. Since for each of them a detailed discussion with precise proofs would require a longer explanation of the problem and the precise evolutionary algorithm used, we just state the result and point the reader interested in the details to the literature. In all cases, the domination result follows directly from the proof of the original result on the expected runtime.

\paragraph{Eulerian cycles.} 
The Eulerian cycle problem asks for a cycle in an undirected graph that traverses each edge exactly once. Ending a series~\cite{Neumann08,DoerrHN07,DoerrKS07} of works discussing suitable representations for the Eulerian cycle problem, matchings in the adjacency lists were found to give very good results~\cite{DoerrJ07}. Among several algorithms proposed in~\cite{DoerrJ07}, we regard the \oea that uses perfect matchings in the adjacency lists as genotype and an edge-based mutation operator. Using a fitness level argument that regards the number of disjoint cycles in the individual, one observes that the runtime $T$ of this algorithm is dominated by the independent sum \[T \preceq \sum_{i=1}^{m/3} \Geom(\tfrac{i}{2em}).\] By Theorem~\ref{tprobgeomharmonic}, this implies 
\begin{align*}
  &E[T] \le 2emH_{m/3}, \\
  &\Pr[T \ge 2(1+\delta)em\ln \tfrac m3] \le (\tfrac m3)^{-\delta} \mbox{ for all $\delta \ge 0$.}
\end{align*}

\paragraph{Vertex covers.} In an undirected graph, a vertex cover is a set of vertices such that each edge contains at least one of them. In~\cite{OlivetoHY09}, it was shown that the classic \oea using a binary representation can efficiently find minimal vertex covers on paths. Since this was shown via a fitness level argument, we easily obtain that the runtime $T$ on a path of length $n$ is dominated by \[T \preceq \sum_{i=1}^{n-1} \Geom(\Omega((\tfrac in)^4)) =: \overline T,\] which gives an expectation of \[E[T] \le E[\overline T] = O(n^4)\] and, via Theorem~\ref{tprobchernoffgeomwitt}, a tail estimate of \[\Pr[T \ge E[\overline T] + \lambda] \le 
\exp\left(-\Omega\left(\tfrac{\lambda}{n^4}\right)\right).\]

\paragraph{The $(1 + (\lambda,\lambda))$ genetic algorithm.} The $(1 + (\lambda,\lambda))$ GA was proposed in~\cite{DoerrDE15} as an algorithm that, despite elitism, also profits from generating search points which are inferior to the current-best solution. All analyses of this algorithm suggest to use it with a mutation rate of $p = \frac{\lambda}{n}$ and a crossover bias $c = \frac{1}{\lambda}$, where $\lambda$ denotes the size of the two offspring populations used by the algorithm. We shall assume these parameters as well. 

Assume that $\lambda = \omega(1)$ as this is the situation in which the $(1 + (\lambda,\lambda))$ GA can outperform the \oea. Let $L = \lceil n \ln\ln(\lambda)/\ln(\lambda) \rceil$. Let $C_1, C_2$ be sufficiently large constants. Imitating the proof of the asymptotically tight runtime analysis in~\cite{DoerrD18}, we see that the runtime of the $(1 + (\lambda,\lambda))$ GA on the \onemax function, measured via the number of iterations, is dominated by $\sum_{i=1}^{C_1 L} \Geom(p_i)$ with $p_i = C_2(1 - (1-\frac in)^{\lambda^2/2})$ for $i \in [1..L]$ and $p_i = 1 - 1/e - o(1)$ for $i \in [L+1 .. C_2 L]$.

\section{Beyond the Fitness Level Theorem}\label{secbeyondfitness}

Above we showed that in all situations where the classic fitness level method can be applied we immediately obtain that the runtime is dominated by a suitable sum of independent geometric distributions. We now show that the domination-by-distribution argument is not restricted to such situations. As an example, we use another combinatorial problem from~\cite{ScharnowTW04}, the single-source shortest path problem, and obtain a simplified and more natural proof of the currently strongest runtime bound for this problem from~\cite{DoerrHK11}. We note without proof that similar arguments could be used in the analysis of other problems where the evolutionary algorithm builds up the optimal solution incrementally from structurally smaller solutions, such as other path problems~\cite{Theile09,DoerrJ10,DoerrHK12} or dynamic programming~\cite{DoerrENTT11}.

In~\cite{ScharnowTW04}, the single-source shortest path problem in a connected undirected graph $G = (V,E)$ with edge weights $w \colon E \to \N$ and source vertex $s \in V$ was solved via a \oea as follows. Individuals are arrays of pointers such that each vertex different from the source has a pointer to another vertex. If, for a vertex $v$, following the pointers gives a path from $v$ to $s$, then the length (=sum of weights of its edges) of this path is the fitness of this vertex; otherwise the fitness of this vertex is infinite. The fitness of an individual is the vector of the fitnesses of all vertices different from the source. In the selection step, an offspring is accepted if and only if all vertices have an at least as good fitness as in the parent. This is called a multi-objective formulation of the single-source shortest path problem in~\cite{ScharnowTW04}. Mutating an individual means choosing a number $k$ from a Poisson distribution with parameter $\lambda = 1$ and then changing $k+1$ pointers to random new targets. 

The main analysis argument in~\cite{ScharnowTW04} is that, due to the use of the multi-objective fitness, a vertex that is connected to the source via a shortest path (let us call such a vertex \emph{optimized} in the following) remains optimized for the remaining run of the algorithm. Hence we can perform a structural fitness level argument over the number of optimized vertices. The probability to increase this number is at least $p := \frac{1}{e(n-1)(n-2)}$ because there is at least one non-optimized vertex $v$ for which the next vertex $u$ on a shortest path from $v$ to $s$ is already optimized. Hence with probability $\frac 1e$ we have $k=0$, with probability $\frac1{n-1}$ we choose $v$, and with probability $\frac1{n-2}$ we rewire its pointer to $u$. This gives an expected runtime of at most \[E[T] \le (n-1)/p = e(n-1)^2(n-2).\] 

To obtain better bounds for certain graph classes, the number $n_i$ of vertices for which a shortest path with fewest edges consists of $i$ edges is defined in~\cite{ScharnowTW04}. With a fitness level argument similar to the one above,  it takes an expected time of at most $en^2 H_{n_1}$ to have all $n_1$-type vertices optimally connected to the source. 
After this, an expected number of at most $en^2 H_{n_2}$ iteration suffices to connect all $n_2$-vertices to the source. Iterating this argument, a runtime estimate of 
\[E[T] \le en^2 \sum_{i=1}^{n-1} H_{n_i} \le e n^2 \sum_{i=1}^{n-1} (\ln(n_i)+1)\] 
is obtained in~\cite{ScharnowTW04}. This expression remains of order $\Theta(n^3)$ in the worst case, but becomes, e.g., $O(n^2 \ell \log(\frac{2n}{\ell}))$ when each vertex can be connected to the source via a shortest path having at most $\ell$ edges. 

By arguments different from fitness levels and with some technical effort, this result was improved to $O(n^2 \max\{\log n, \ell\})$ in~\cite{DoerrHK11}. We now show that this improvement could have been obtained via domination arguments in a very natural way. 

Consider some vertex $v$ different from the source. Fix some shortest path $P$ from $v$ to $s$ having at most $\ell$ edges. Let $V_v$ be the set of (at most $\ell$) vertices on $P$ different from $s$. As before, in each iteration we have a probability of at least $p = \frac{1}{e(n-1)(n-2)}$ that a non-optimized vertex of $V_v$ becomes optimized. Consequently, the time $T_v$ to connect $v$ to $s$ via a shortest path is dominated by a sum of $\ell$ independent $\Geom(p)$ random variables. We conclude that there are random variables $X_{ij}$, $i \in [1..\ell]$, $j\in [1..n-1]$, such that
\begin{enumerate}
\item $X_{ij} \sim \Geom(p)$ for all $i \in [1..\ell]$ and $j \in [1..n-1]$, 
\item for all $j \in [1..n-1]$ the variables $X_{1j},\dots,X_{\ell j}$ are independent, and 
\item the runtime $T$ is dominated by $\max\{Y_j \mid j \in [1..n-1]\}$, where $Y_j:=\sum_{i=1}^{\ell} X_{ij}$ for all $j \in [1..n-1]$. 
\end{enumerate}

It remains to deduce from this domination statement a runtime bound. Let $\delta = \max\{\frac{4 \ln(n-1)}{\ell-1},\sqrt{\frac{4 \ln(n-1)}{\ell-1}}\}$. For all $j \in [1..n-1]$, by~\eqref{eq:probcgeomgleichU}, we estimate
\begin{align*}
\Pr[Y_j \ge (1+\delta) E[Y_j]] &\le \exp\left(-\frac 12 \frac{\delta^2}{1+\delta} (\ell-1)\right) \\
&\le \exp\left(-\frac 14 \min\{\delta^2,\delta\} (\ell-1)\right) \\
&\le \exp\left(-\frac 14 \frac{4 \ln(n-1)}{\ell-1} (\ell-1)\right) \\
&= \frac 1 {n-1}.
\end{align*}
For all $\eps > 0$, again by~\eqref{eq:probcgeomgleichU}, we compute 
\begin{align*}
\Pr[Y_j \ge (1+\eps)(1+\delta) E[Y_j]] &\le \exp\left(-\frac 12 \frac{(\delta+\eps+\delta\eps)^2}{(1+\delta)(1+\eps)} (\ell-1)\right) \\
&\le \exp\left(-\frac 12 \frac{\delta^2(1+\eps)^2}{(1+\delta)(1+\eps)} (\ell-1)\right) \\
&\le \exp\left(-\frac 12 \frac{\delta^2}{1+\delta} (\ell-1)\right)^{1+\eps} \\
&\le (n-1)^{-(1+\eps)}.
\end{align*}

Recall that the runtime $T$ is dominated by $Y = \max\{Y_j \mid j \in [1..n-1]\}$, where we did not make any assumption on the correlation of the $Y_j$. In particular, they do not need to be independent. Let 
\[T_0 = (1+\delta)E[Y_1] = (1+\delta)\frac{\ell}{p}.\] 
Then \[\Pr[Y \ge (1+\eps)T_0] \le \sum_{j=1}^{n-1} \Pr[Y_j \ge (1+\eps)T_0] \le (n-1)^{-\eps}\] by the union bound. Transforming this tail bound into an expectation via standard arguments, e.g., Corollary~6.2~(c) in~\cite{Doerr18bookchapter}, we obtain $E[Y] \le (1 + \frac 1 {\ln(n-1)}) T_0$. We thus have shown the following result.

\begin{theorem}
  Let $G$ be an undirected graph on $n$ vertices together with integral edge weights. Let $s$ be a vertex. Assume that all vertices are connected to $s$ via a shortest path consisting of at most $\ell$ edges. Let $T$ be the runtime of the EA proposed in~\cite{ScharnowTW04} to solve the single-source shortest path problem in $G$. Then, with $\delta = \max\{\frac{4 \ln(n-1)}{\ell-1},\sqrt{\frac{4 \ln(n-1)}{\ell-1}}\}$, $p = \frac{1}{e(n-1)(n-2)}$, and $T_0 := (1+\delta) \frac \ell p$, we have
\begin{align*}
&E[T] \le \left(1+\frac{1}{\ln(n-1)}\right) T_0,\\
&\Pr[T \ge (1+\eps) T_0] \le (n-1)^{-\eps}
\end{align*}
for all $\eps \ge 0$.
\end{theorem}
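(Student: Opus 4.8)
The plan is to read the claim off the structural domination statement established above, namely that the runtime satisfies $T \preceq Y := \max\{Y_j \mid j \in [1..n-1]\}$, where $Y_j = \sum_{i=1}^{\ell} X_{ij}$ is a sum of $\ell$ independent $\Geom(p)$ random variables. Since $T \preceq Y$, it suffices to prove the tail bound and the expectation bound for $Y$; the corresponding statements for $T$ then follow from the definition of stochastic domination and from Lemma~\ref{lprobdomexp}, respectively.

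First I would bound the tail of a single $Y_j$. As the $\ell$ geometric variables summed in $Y_j$ all share the parameter $p$, I can apply the equal-probability Chernoff bound~\eqref{eq:probcgeomgleichU} with $E[Y_j] = \ell/p$. The piecewise definition $\delta = \max\{\frac{4\ln(n-1)}{\ell-1}, \sqrt{\frac{4\ln(n-1)}{\ell-1}}\}$ is engineered so that $\min\{\delta^2,\delta\} = \frac{4\ln(n-1)}{\ell-1}$ in both the regime $\delta \ge 1$ and the regime $\delta < 1$. Using $\frac12 \frac{\delta^2}{1+\delta} \ge \frac14 \min\{\delta^2,\delta\}$ then turns the exponent into $\frac14 \min\{\delta^2,\delta\}(\ell-1) = \ln(n-1)$, giving $\Pr[Y_j \ge (1+\delta)E[Y_j]] \le \frac{1}{n-1}$. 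Applying the same bound once more at the threshold $(1+\eps)(1+\delta)E[Y_j]$ sharpens this to $\Pr[Y_j \ge (1+\eps)(1+\delta)E[Y_j]] \le (n-1)^{-(1+\eps)}$.

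Setting $T_0 = (1+\delta)\ell/p = (1+\delta)E[Y_1]$, I would then take a union bound over the $n-1$ vertices to obtain $\Pr[Y \ge (1+\eps)T_0] \le (n-1)\cdot(n-1)^{-(1+\eps)} = (n-1)^{-\eps}$, and hence $\Pr[T \ge (1+\eps)T_0] \le (n-1)^{-\eps}$ by domination. For the expectation, I would integrate this tail bound through the standard tail-to-expectation conversion (Corollary~6.2(c) in~\cite{Doerr18bookchapter}): the part up to $T_0$ contributes at most $T_0$, and integrating the $(n-1)^{-\eps}$ tail above $T_0$ contributes a further $T_0/\ln(n-1)$, so that $E[T] \le E[Y] \le (1+\frac{1}{\ln(n-1)})T_0$.

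Most of this is assembly, since the per-vertex estimates are already derived; the one computational point needing care is verifying that the chosen $\delta$ forces $\min\{\delta^2,\delta\} = \frac{4\ln(n-1)}{\ell-1}$ in both regimes. The conceptually decisive feature, however, is that the tail bound for the maximum $Y = \max_j Y_j$ is obtained purely by a union bound and so requires no independence among the $Y_j$. This is essential, because the variables $X_{ij}$ for different vertices $j$ are read off the history of a single run and are therefore correlated; the union bound sidesteps these dependencies entirely, which is precisely why this domination-based argument is shorter and more transparent than the original analysis of~\cite{DoerrHK11}.
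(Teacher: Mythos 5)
Your proposal is correct and follows the paper's own proof essentially step for step: the same domination statement $T \preceq \max_j Y_j$, the same application of the equal-rate Chernoff bound~\eqref{eq:probcgeomgleichU} at the thresholds $(1+\delta)E[Y_j]$ and $(1+\eps)(1+\delta)E[Y_j]$ with the identical choice of $\delta$ forcing the exponent $\frac14\min\{\delta^2,\delta\}(\ell-1)=\ln(n-1)$, the same union bound over the $n-1$ vertices (correctly noting that no independence among the $Y_j$ is needed), and the same tail-to-expectation conversion via Corollary~6.2~(c) of~\cite{Doerr18bookchapter}. No gaps; nothing further is required.
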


This result is of same asymptotic order as the bound in~\cite{DoerrHK11}. We see the main advantage of the proof above in the fact that it is less technical and more natural than the previous one. However, our analysis also gives a better leading constant. For $\ell \gg \log(n)$, for example, our bound on the expected runtime is $(1+o(1)) e \ell n^2$, whereas it is $8 (1+o(1)) e \ell n^2$ in~\cite{DoerrHK11}.

\section{Structural Domination}\label{seclower}

So far we have used stochastic domination to compare runtime distributions. We now show that stochastic domination can be a very useful tool also to express structural properties of the optimization process. As an example, we give a short and elegant proof for the result of Witt~\cite{Witt13} that compares the runtimes of mutation-based algorithms. The main reason why our proof is significantly shorter than the one of Witt is that we use the notion of stochastic domination also for the distance from the optimum. 

To state this result, we need the notion of a \emph{$(\mu,p)$ mutation-based algorithm} introduced in~\cite{Sudholt13}. This class of algorithms is  called only \emph{mutation-based} in~\cite{Sudholt13}, but since (i)~it does not include all adaptive algorithms using mutation only, e.g., those regarded in~\cite{JansenW06,OlivetoLN09,BottcherDN10,BadkobehLS14,DangL16self,DoerrGWY17,DoerrWY18}, (ii)~it does not include all algorithms using a different mutation operator than standard-bit mutation, e.g., those in~\cite{DoerrDY16PPSN,DoerrDY16gecco,LissovoiOW17,DoerrLMN17}, and (iii)~this notion collides with the notion of unary unbiased black-box complexity algorithms (see~\cite{LehreW12}), which with some justification could also be called the class of mutation-based algorithms, we feel that a notion making these restrictions precise is more appropriate.

The class of $(\mu,p)$ mutation-based algorithms comprises all algorithms which first generate a set of $\mu$ search points uniformly and independently at random from $\{0,1\}^n$ and then repeat generating new search points from any of the previous ones via standard-bit mutation with probability~$p$ (that is, by flipping bits independently with probability~$p$). This class includes all $(\mu+\lambda)$ and $(\mu,\lambda)$ EAs which only use standard-bit mutation with static mutation rate~$p$.

Denote by \oeamu the following algorithm in this class. It first generates $\mu$ random search points. From these, it selects uniformly at random one with highest fitness and then continues from this search point as a \oea, that is, repeatedly generates a new search point from the current one via standard-bit mutation with rate $p$ and replaces the previous one by the new one if the new one is not worse (in terms of the fitness). This algorithm was called \oea with BestOf($\mu$) initialization in~\cite{LaillevaultDD15}.

For any algorithm $\calA$ from the class of $(\mu,p)$ mutation-based algorithms and any fitness function $f \colon \{0,1\}^n \to \R$, let us denote by $T(\calA,f)$ the runtime of the algorithm $\calA$ on the fitness function $f$, that is, the number of the first individual that is an optimal solution. Usually, this will be $\mu$ plus the number of the iteration in which the optimum was generated. To cover also the case that one of the random initial individuals is optimal, let us assume that these initial individuals are generated sequentially. 

In this language, Witt~\cite{Witt13} shows the following remarkable result.
\begin{theorem}\label{tprobdom}
  For any $(\mu,p)$ mutation-based algorithm $\calA$ and any $f \colon \{0,1\}^n \to \R$ with unique global optimum, \[T(\mbox{\oeamu},\onemax) \preceq T(\calA,f).\]
\end{theorem}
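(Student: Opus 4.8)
The plan is to prove the stronger, structural statement announced in the introduction: once both processes are run with the optimum placed at $1^n$, the Hamming distance to the optimum in the \oeamu-on-\onemax process is pointwise dominated by the corresponding distance in the $\calA$-on-$f$ process. Since standard-bit mutation commutes in distribution with every map $x \mapsto x \oplus c$ and uniform initialization is invariant under such maps, I would first observe that running $\calA$ on $f$ has the same runtime distribution as running it on $x \mapsto f(x \oplus c)$; choosing $c$ to move the unique optimum of $f$ to $1^n$, I may assume without loss of generality that $f$ and \onemax share the optimum $1^n$. Write $\mathrm{dist}(x) = n - \onemax(x)$. For the \oeamu process let $D_t$ be the distance of its current individual after $t$ generated individuals (with $D_\mu$ the minimal distance among the $\mu$ initial points), and for the $\calA$ process let $D'_t$ be the minimal distance to the optimum among all individuals generated so far. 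Both are non-increasing and the two runtimes are the first times these quantities hit $0$, so it suffices to construct a coupling of the two runs with $D_t \le D'_t$ for all $t$; then $T(\oeamu,\onemax) \le T(\calA,f)$ holds pointwise and Theorem~\ref{tprobdomcou} yields the claimed domination.

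The technical heart is a monotonicity lemma for standard-bit mutation with rate $p \le \tfrac12$: if $\mathrm{dist}(x) \le \mathrm{dist}(x')$, then $\mathrm{dist}(\mutate(x)) \preceq \mathrm{dist}(\mutate(x'))$. To prove it I would note that $\onemax(\mutate(x))$ depends only on $m := \onemax(x)$ and is distributed as $m - \Bin(m,p) + \Bin(n-m,p)$ (independent flips among the ones and among the zeros). Comparing $m$ with $m+1$, I couple a point with $m$ ones and one obtained from it by turning a single zero-bit into a one, applying the same flip decisions to the $n-1$ shared bits; those contribute identically, so the two offspring \onemax-values differ only through the special bit, giving $C + \mathrm{Bern}(p)$ versus $C + \mathrm{Bern}(1-p)$ for a common independent $C$. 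As $p \le \tfrac12$ yields $\mathrm{Bern}(p) \preceq \mathrm{Bern}(1-p)$, Lemma~\ref{lprobdomsum} finishes the one-step comparison, and induction on $m$ gives the lemma; restated in terms of distances, a larger parent distance produces a stochastically larger offspring distance.

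With this lemma I would build the coupling by induction, maintaining $D_t \le D'_t$. For the base case I couple the two initializations so that they generate identical points; the running minimum distances then agree for $t \le \mu$, and in particular $D_\mu = D'_\mu$, since \oeamu simply keeps a closest initial point. For the inductive step I use two facts. First, the parent that $\calA$ mutates lies in its current population, whose minimal distance is at least the running minimum $D'_t$, so its distance $d'$ satisfies $d' \ge D'_t \ge D_t$. Second, on \onemax an offspring is accepted exactly when it is no farther from the optimum, so $D_{t+1} = \min(D_t, d^{\mathrm{OM}}_{\mathrm{off}})$ for the offspring distance $d^{\mathrm{OM}}_{\mathrm{off}}$, while $D'_{t+1} = \min(D'_t, d^{\calA}_{\mathrm{off}})$. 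Because $D_t \le d'$, the monotonicity lemma combined with Theorem~\ref{tprobdomcou} lets me couple the two mutations so that $d^{\mathrm{OM}}_{\mathrm{off}} \le d^{\calA}_{\mathrm{off}}$; here it is convenient that on \onemax the process depends on the current string only through its distance, so the \oeamu run may be realized as a distance-valued Markov chain and I need only match the offspring-distance marginal. Then $D_{t+1} \le d^{\mathrm{OM}}_{\mathrm{off}} \le d^{\calA}_{\mathrm{off}}$ and $D_{t+1} \le D_t \le D'_t$, whence $D_{t+1} \le \min(D'_t, d^{\calA}_{\mathrm{off}}) = D'_{t+1}$, closing the induction.

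I expect the main obstacle to be the monotonicity lemma together with the role of the assumption $p \le \tfrac12$ (for larger rates the claimed ordering of offspring distances can fail, although such rates are irrelevant to the runtime anyway), and secondarily the bookkeeping with the running minimum $D'_t$ that guarantees $\calA$'s mutated parent is never closer to the optimum than \oeamu's current individual. Once the lemma and this invariant are established, the reduction of a runtime comparison to a pointwise comparison of distance processes — precisely the structural use of domination advertised in the introduction — makes the remainder routine.
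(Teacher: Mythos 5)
Your proposal is correct and follows essentially the same route as the paper's proof: both reduce the runtime comparison to a coupled, inductively maintained ordering of the distances to the optimum, combining the mutation monotonicity lemma (Lemma~\ref{lprobdommut}) with the coupling characterization of domination (Theorem~\ref{tprobdomcou}). The differences are only organizational: you track current/running-minimum distances and realize the \oeamu as a distance-valued Markov chain where the paper couples the full string sequences via the per-index invariant $|\tilde x^{(t)}|_1 \le |\tilde y^{(t)}|_1$ (which also lets you sidestep the tie-breaking technicality in the initialization), and you re-prove the mutation lemma that the paper imports from Witt.
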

This result significantly extends results of a similar flavor in~\cite{BorisovskyE08,DoerrJW12algo,Sudholt13}. The importance of such types of results is that they allow to prove lower bounds for the performance of many algorithm on essentially arbitrary fitness functions by just regarding the performance of the \oeamu on \onemax. 

Let us denote by $|x|_1$ the number of ones in the bit string $x \in \{0,1\}^n$. Using similar arguments as in~\cite[Section~5]{DrosteJW00} and~\cite[Lemma~13]{DoerrJW12algo}, Witt~\cite{Witt13} shows the following natural domination relation between offspring generated via standard-bit mutation.
\begin{lemma}\label{lprobdommut}
  Let $x, y \in \{0,1\}^n$. Let $p \in [0,\frac 12]$. Let $x', y'$ be obtained from $x, y$ via standard-bit mutation with rate $p$. If $|x|_1 \le |y|_1$, then $|x'|_1 \preceq |y'|_1$.
\end{lemma}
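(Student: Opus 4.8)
The plan is to reduce the statement to an explicit coupling argument, exploiting the positional symmetry of standard-bit mutation together with the crucial hypothesis $p \le \tfrac12$. First I would observe that, because standard-bit mutation flips every bit independently with the same probability $p$, the distribution of $|x'|_1$ depends only on the number $|x|_1$ of ones in $x$ and not on their positions: each of the $|x|_1$ one-bits survives as a one with probability $1-p$, and each of the $n-|x|_1$ zero-bits turns into a one with probability $p$, all independently. Hence $|x'|_1$ is distributed as a sum of $|x|_1$ independent Bernoulli($1-p$) variables and $n-|x|_1$ independent Bernoulli($p$) variables, and likewise for $|y'|_1$ with $|y|_1$ in place of $|x|_1$. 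Since these distributions depend only on $|x|_1$ and $|y|_1$, it suffices to prove the domination for one convenient pair of strings with these numbers of ones; in particular, using $|x|_1 \le |y|_1$, I may assume that the set of one-bits of $x$ is contained in that of $y$, i.e.\ that $x \le y$ coordinatewise.

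With such nested representatives in hand, I would build a coupling $(\tilde x', \tilde y')$ of $(x',y')$ on a common probability space and verify $\tilde x' \le \tilde y'$ coordinatewise, after which Theorem~\ref{tprobdomcou} (equivalently, Lemma~\ref{Xlprobtrivialdom}(i) applied to $|\tilde x'|_1 \le |\tilde y'|_1$) yields $|x'|_1 \preceq |y'|_1$. For each coordinate $i$ with $x_i = y_i$ I would use a single source of randomness to mutate both strings identically, so that $\tilde x'_i = \tilde y'_i$. For each coordinate $i$ where $x_i = 0$ and $y_i = 1$ (the only places the two strings differ), the mutated bit $\tilde x'_i$ must be a one with probability $p$ and $\tilde y'_i$ a one with probability $1-p$; drawing a single uniform $V_i \in [0,1]$ and setting $\tilde x'_i := \mathbf{1}[V_i \le p]$ and $\tilde y'_i := \mathbf{1}[V_i \le 1-p]$ produces the correct marginals and, because $p \le 1-p$, guarantees $\tilde x'_i \le \tilde y'_i$ with probability one. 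Summing over all coordinates then gives $|\tilde x'|_1 = \sum_i \tilde x'_i \le \sum_i \tilde y'_i = |\tilde y'|_1$ pointwise.

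The step I expect to carry the real content is the coordinatewise coupling of the differing bits: this is exactly where the hypothesis $p \le \tfrac12$ enters, since $p \le \tfrac12$ is equivalent to $p \le 1-p$, which is precisely what makes a Bernoulli($p$) variable dominated by a Bernoulli($1-p$) one and so permits the monotone coupling $\tilde x'_i \le \tilde y'_i$. The only other point needing care is the reduction to nested representatives, which must be justified by the positional symmetry of standard-bit mutation noted above; without it one could not assume $x \le y$ coordinatewise, and the clean pointwise comparison would break down.
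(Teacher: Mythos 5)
Your proof is correct and complete. Note, however, that the paper itself does not prove this lemma at all: it states it as a known result, attributing it to Witt~\cite{Witt13}, who in turn argues along the lines of~\cite[Section~5]{DrosteJW00} and~\cite[Lemma~13]{DoerrJW12algo}. So there is no in-paper proof to match against; what you have supplied is a self-contained argument of exactly the kind the cited works use. Your two steps are both sound: (a)~since $|x'|_1$ is distributed as an independent sum $\Bin(|x|_1,1-p)+\Bin(n-|x|_1,p)$, its law depends only on $|x|_1$, and since stochastic domination compares only the two marginal laws (the statement imposes no joint distribution on $(x',y')$), you are free to replace $x,y$ by nested representatives with $x\le y$ coordinatewise; (b)~the coordinatewise monotone coupling, identical randomness where $x_i=y_i$ and a common uniform $V_i$ with thresholds $p\le 1-p$ where $x_i=0,y_i=1$, has the correct marginals and gives $\tilde x'\le\tilde y'$ pointwise, whence $|x'|_1\preceq|y'|_1$ by Theorem~\ref{tprobdomcou} (or Lemma~\ref{Xlprobtrivialdom}). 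You also correctly isolate where $p\le\frac12$ enters, namely the domination of $\mathrm{Bernoulli}(p)$ by $\mathrm{Bernoulli}(1-p)$ on the differing coordinates; this is precisely the hypothesis that fails for $p>\frac12$, where the lemma is false in general. A pleasant side effect of your route is that it exhibits the coupling explicitly, which is the same device the paper later relies on (via Theorem~\ref{tprobdomcou}) in the proof of Theorem~\ref{tprobdom}, so your argument integrates seamlessly with how the lemma is actually used there.
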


We are now ready to give our alternate proof for Theorem~\ref{tprobdom}. While it is clearly shorter that the original one in~\cite{Witt13}, we also feel that it is more natural. In very simple words, it shows that $T(\calA,f)$ dominates $T(\mbox{\oeamu},\onemax)$ because the search points generated in the run of the \oeamu on $\onemax$ always are at least as close to the optimum (in the domination sense) as in the run of $\calA$ on $f$, and this follows from the previous lemma, a suitable coupling, and induction.

\begin{proof}[Proof of Theorem~\ref{tprobdom}]
  As a first small technicality, let us assume that the \oeamu in iteration $\mu+1$ does not choose a random optimal search point, but the last optimal search point. Since all the first $\mu$ individuals are generated independently, this modification does not change anything.

  Since $\calA$ treats bit-positions and bit-values in a symmetric fashion, we may without loss of generality assume that the unique optimum of $f$ is $(1,\dots,1)$. 
  
  Let $x^{(1)}, x^{(2)}, \dots$ be the sequence of search points generated in a run of $\calA$ on the fitness function $f$. Hence $x^{(1)}, \dots, x^{(\mu)}$ are independently and uniformly distributed in $\{0,1\}^n$ and all subsequent search points are generated from suitably chosen previous ones via standard-bit mutation with rate $p$. Let $y^{(1)}, y^{(2)}, \dots$ be the sequence of search points generated in a run of the \oeamu on the fitness function $\onemax$. 
  
  We show how to couple these random sequences of search points in a way that $|\tilde x^{(t)}|_1 \le |\tilde y^{(t)}|_1$ for all $t \in \N$. We take as common probability space $\Omega$ simply the space that $(x^{(t)})_{t \in \N}$ is defined on and let $\tilde x^{(t)} = x^{(t)}$ for all $t \in \N$. 
  
  We define the $\tilde y^{(t)}$ inductively as follows. For $t \in [1..\mu]$, let $\tilde y^{(t)} = x^{(t)}$. Note that this trivially implies $|\tilde x^{(t)}|_1 \le |\tilde y^{(t)}|_1$ for these search points. Let $t > \mu$ and assume that $|\tilde x^{(t')}|_1 \le |\tilde y^{(t')}|_1$ for all $t' < t$. Let $s \in [1..t-1]$ be maximal such that $|\tilde y^{(s)}|_1$ is maximal among $|\tilde y^{(1)}|_1, \dots, |\tilde y^{(t-1)}|_1$. Let $r \in [1..t-1]$ be such that $x^{(t)}$ was generated from $x^{(r)}$ in the run of $\calA$ on $f$. By induction, we have $|x^{(r)}|_1 \le |\tilde y^{(r)}|_1$. By choice of $s$ we have $|\tilde y^{(r)}|_1 \le |\tilde y^{(s)}|_1$. Consequently, we have $|x^{(r)}|_1 \le |\tilde y^{(s)}|_1$. By Lemma~\ref{lprobdommut} and Theorem~\ref{tprobdomcou}, there is a random $\tilde y^{(t)}$ (defined on $\Omega$) such that $\tilde y^{(t)}$ has the distribution of being obtained from $\tilde y^{(s)}$ via standard-bit mutation with rate $p$ and such that $|x^{(t)}|_1 \le |\tilde y^{(t)}|_1$. 
  
  With this construction, the sequence $(\tilde y^{(t)})_{t \in \N}$ has the same distribution as $(y^{(t)})_{t \in \N}$. This is because the first $\mu$ elements are random and then each subsequent one is generated via standard-bit mutation from the current-best one, which is just the way the \oeamu is defined. At the same time, we have $|\tilde x^{(t)}|_1 \le |\tilde y^{(t)}|_1$ for all $t \in \N$. Consequently, we have $\min\{t \in \N \mid |\tilde y^{(t)}|_1 = n\} \le \min\{t \in \N \mid |x^{(t)}|_1 = n\}$. Since  $T(\mbox{\oeamu},\onemax)$ and $\min\{t \in \N \mid |\tilde y^{(t)}|_1 = n\}$ are identically distributed and also $T(\calA,f)$ and $\min\{t \in \N \mid |x^{(t)}|_1 = n\}$ are identically distributed, we have $T(\mbox{\oeamu},\onemax) \preceq T(\calA,f)$. 
\end{proof}

\section{Counter-Examples}

Stochastic domination is a strong property. Therefore, it is easy to encounter situations in which the intuitive feeling that one process is faster than a second one does not imply stochastic domination. Here are a few examples.

\paragraph{Random search vs. \oea on \onemax.} Let us compare the performance of the random search heuristic (cf.~Section~\ref{sec:randomsearch}) and the \oea on the \onemax benchmark function. Both intuitive considerations and the known results on the expected runtimes ($2^n$ for random search, see Lemma~\ref{lem:randomsearch}, and at most $en(\ln(n)+1)$ for the \oea, see~\eqref{eq:onemax}) suggest that the \oea is much more efficient than random search. Nevertheless, the runtime $T_{RS}$ of random search does not dominate the runtime $T_{EA}$ of the \oea, as we show now. 

Note that to have a fair comparison between these two algorithms, we here regard as runtime truly the number of individuals evaluated until the optimum is first evaluated, that is, we do count the evaluation of the initial individual for the \oea. For either of the two algorithms, let $X_1$ and $X_2$ be the first two individuals generated (where we assume that the algorithm does not stop when $X_1$ is already the optimum). Denoting the optimal solution by $x^*$, the runtime $T$ of this algorithm satisfies 
\[\Pr[T \le 2] = \Pr[X_1 = x^*] + \Pr[X_2 = x^*] - \Pr[X_1 = X_2 = x^*].\] 
Note that $X_2$ also for the \oea is uniformly distributed in $\{0,1\}^n$. Hence 
\begin{align*}
  \Pr[T_{RS} \le 2] & = 2^{-n+1} - 2^{-2n} = (1+o(1)) 2^{-n+1},\\
  \Pr[T_{EA} \le 2] & = 2^{-n+1} - (1-\tfrac 1n)^n 2^{-n} = (1+o(1))(1 - \tfrac 1 {2e}) 2^{-n+1},
\end{align*}
contradicting the assertion that $T_{EA} \preceq T_{RS}$.

\paragraph{Fitness-proportionate selection.}
When optimizing objective functions with a strong fitness-distance correlation via a reasonable evolutionary algorithm, then it seems plausible that replacing individuals by better ones can only improve the performance. This is not true, for example, when fitness-proportionate selection is used, as the following example shows. 

Consider the optimization of the \onemax function, which has the perfect fitness-distance correlation. Assume we use an algorithm which at some time selects an individual $x$ from a population $P = \{x_1, \dots, x_\mu\}$, $\mu \ge 2$, via fitness-proportionate selection, that is, we have 
\[\Pr[x = x_i] = \frac{f(x_i)}{\sum_{j=1}^\mu f(x_i)},\] 
where $f = \onemax$. Now let $x_1', \dots, x_\mu'$ be search points with $f(x'_i) \ge f(x_i)$ for all $i \in [1..\mu]$. Let $P' = \{x_1', \dots, x_\mu'\}$ and assume that we select $x'$ from $P'$ via fitness-proportionate selection. Then we do not necessarily have $f(x) \preceq f(x')$. 

For an extreme counter-example, let $n$ be a multiple of $10$ and let the search points $x_i$ be such that $f(x_1) = 0.8n$ and $f(x_i) = 0$ for $i \in [2..\mu]$. Then 
\[\Pr[f(x) \ge 0.8n] = \Pr[x = x_1] = 1.\] 
Assume that $f(x'_i) = f(x_i)+0.1n$ for all $i \in [1..\mu]$. Then 
\[\Pr[f(x') \ge 0.8n] = \Pr[x' = x'_1] = \frac{0.8n+0.1n}{0.8n+\mu \cdot 0.1n} = \frac{9}{\mu+8} < 1.\].

\paragraph{Dependencies.} So far, we mostly experienced that stochastic domination is very well-behaved. If $X \preceq Y$, then all reasonable upper bound statements on $Y$ hold as well for $X$. Also, stochastic domination allowed us to compare two random variables irrespective of possible dependencies (in fact, we do not even need that they are defined over a common probability space). We now want to outline a point where some caution is required, and this is that dependencies between other random variables cannot be ignored. For example, $X_1 \preceq Y_1$ and $X_2 \preceq Y_2$ do not necessarily imply that 
\[\max\{X_1, X_2\} \preceq \max\{Y_1, Y_2\}.\] 
Whether such a statement is true depends on the correlation between the $X_i$ and the correlation between the~$Y_i$. For this reason, we needed to construct the coupling (which determines the dependencies) of the two processes in the proof of Theorem~\ref{tprobdom}.

As a simple counter-example, we can again regard the processes of running random search and the \oea on the \onemax function. Let $X_0, X_1, \dots$ denote the search points generated by random search and let $Y_0, Y_1, \dots$ denote the search points generated by the \oea. Clearly, each $X_i$ is uniformly distributed in $\{0,1\}$, hence $f(X_i) \sim \Bin(n,\frac 12)$. For the \onemax process, we have $f(Y_0) \sim \Bin(n,\tfrac 12)$. For all $i \ge 0$, we also have $f(Y_0) \preceq f(Y_i)$, and hence $f(X_i) \preceq f(Y_i)$. Consequently, we have $f(X_i) \preceq f(Y_i)$ for all $i \ge 0$. However, as shown above, we do not have $\max\{f(X_0), f(X_1)\} \preceq \max\{f(Y_0), f(Y_1)\}$.

\section{Conclusion}

In this work, we argued that stochastic domination can be very useful in runtime analysis, both to formulate more informative results and to obtain simpler and more natural proofs. We also showed that in many situations, in particular, whenever the fitness level method is applicable, it is easily possible to describe the runtime via a domination statement. 
 
We note however that not all classic proofs easily reveal details on the distribution. For results obtained via random walk arguments, e.g., the optimization of the short path function SPC$_n$~\cite{JansenW01}, monotone polynomials~\cite{WegenerW05}, or vertex covers on paths-like graphs~\cite{OlivetoHY09}, as well as for results proven via additive drift~\cite{HeY01}, the proofs often give little information about the runtime distribution. Note however that in~\cite{GarnierKS99} results on the convergence of the (suitably scaled) runtime distribution could be obtained.

For results obtained via the average weight decrease method~\cite{NeumannW07} or multiplicative drift analysis~\cite{DoerrG13algo}, the proofs also do not give information on the runtime distribution. However, the probabilistic runtime bound of type $\Pr[T \ge T_0 + \lambda] \le (1-\delta)^\lambda$ obtained from these methods implies that the runtime is dominated by $T \preceq T_0 -1 + \Geom(1-\delta)$. 

Overall, both from regarding these results and the history of the field, we suggest to more frequently formulate results via domination statements. Even in those cases where the probabilistic tools at the moment are not ready to exploit such a statement, there is a good chance future developments overcome this shortage and then it pays off if the result is readily available in a distribution form and not just as an expectation.

}

\newcommand{\etalchar}[1]{$^{#1}$}

\end{document}